%%%%%%%% ICML 2020 EXAMPLE LATEX SUBMISSION FILE %%%%%%%%%%%%%%%%%

\documentclass{article}

% Recommended, but optional, packages for figures and better typesetting:
\usepackage{microtype}
\usepackage{graphicx}
\usepackage{subfigure}
\usepackage{booktabs} % for professional tables

\usepackage[utf8]{inputenc} % allow utf-8 input
\usepackage[T1]{fontenc}    % use 8-bit T1 fonts
\usepackage[USenglish]{babel}

\usepackage{amsfonts,mathtools,amssymb}       % blackboard math symbols
\usepackage{amsthm}
\newtheorem*{rem}{Remark}
\newtheorem*{prop}{Proposition}

\usepackage{nicefrac}       % compact symbols for 1/2, etc.
\usepackage{microtype}      % microtypography
\usepackage{bm}
\usepackage{csquotes}
\usepackage{bbold}
\usepackage{enumitem}

% Use the following line for the initial blind version submitted for review:
\usepackage[accepted,nohyperref]{icml2020}

% If accepted, instead use the following line for the camera-ready submission:
%\usepackage[accepted]{icml2020}

% The \icmltitle you define below is probably too long as a header.
% Therefore, a short form for the running title is supplied here:

\usepackage{xr-hyper}

% Colors
\usepackage[dvipsnames]{xcolor, colortbl}
\definecolor{Gray}{gray}{0.9}
\definecolor{arsenic}{rgb}{0.23, 0.27, 0.29}
\definecolor{azure}{rgb}{0.94, 1.0, 1.0}

\usepackage{tikz}
\usetikzlibrary{arrows.meta}
\usetikzlibrary{arrows,decorations.markings}

\newcommand*\circleds[1]{\tikz[baseline=(char.base)]{%
            \node[shape=circle,draw,inner sep=0.5pt] (char) {#1};}}
\newcommand{\snode}{\textsc{survNode}}

\icmltitlerunning{A General Method for Survival Analysis and Multi-State Modelling}

\usepackage{hyperref}

% \renewcommand*\ttdefault{lmvtt}
% Attempt to make hyperref and algorithmic work together better:

% \addto\extrasenglish{%
%
% }

\begin{document}

\twocolumn[
\icmltitle{A General Framework for Survival Analysis and Multi-State Modelling}

\author{\name Stefan Groha\thanks{Equal contribution.} \email{stefanm\_groha@dfci.harvard.edu} \\
       \addr Dana--Farber Cancer Institute \\
       Harvard Medical School
       \AND
       \name Sebastian M Schmon$^*$\\
       \addr Department of Statistics \\
       University of Oxford
       \AND
       \name Alexander Gusev \\ 
       \addr Dana--Farber Cancer Institute \\ 
       Harvard Medical School}
\icmlsetsymbol{equal}{*}

\begin{icmlauthorlist}
\icmlauthor{Stefan Groha}{equal,df}
\icmlauthor{Sebastian M.~Schmon}{equal,imp}
\icmlauthor{Alexander Gusev}{df}
\end{icmlauthorlist}

\icmlaffiliation{df}{Dana--Farber Cancer Institute \\ 
       Harvard Medical School, USA}
\icmlaffiliation{imp}{Improbable, London, United Kingdom}

\icmlcorrespondingauthor{Stefan Groha}{stefanm\_groha@dfci.harvard.edu}

% You may provide any keywords that you
% find helpful for describing your paper; these are used to populate
% the "keywords" metadata in the PDF but will not be shown in the document
\icmlkeywords{Machine Learning, ICML}

\vskip 0.3in
]

% this must go after the closing bracket ] following \twocolumn[ ...

% This command actually creates the footnote in the first column
% listing the affiliations and the copyright notice.
% The command takes one argument, which is text to display at the start of the footnote.
% The \icmlEqualContribution command is standard text for equal contribution.
% Remove it (just {}) if you do not need this facility.

% \printAffiliationsAndNotice{}  % leave blank if no need to mention equal contribution
\printAffiliationsAndNotice{\icmlEqualContribution} % otherwise use the standard text.

\begin{abstract}
Survival models are a popular tool for the analysis of time to event data with applications in medicine, engineering, economics, and many more. Advances like the Cox proportional hazard model have enabled researchers to better describe hazard rates for the occurrence of single fatal events, but are unable to accurately model competing events and transitions. Common phenomena are often better described through multiple states, for example: the progress of a disease modeled as healthy, sick and dead instead of healthy and dead, where the competing nature of death and disease has to be taken into account. Moreover, Cox models are limited by modeling assumptions, like proportionality of hazard rates and linear effects. Individual characteristics can vary significantly between observational units, like patients, resulting in idiosyncratic hazard rates and different disease trajectories. These considerations require flexible modeling assumptions.
To overcome these issues, we propose the use of neural ordinary differential equations as a flexible and general method for estimating multi-state survival models by directly solving the Kolmogorov forward equations. To quantify the uncertainty in the resulting individual cause-specific hazard rates, we further introduce a variational latent variable model and show that this enables meaningful clustering with respect to multi-state outcomes as well as interpretability regarding covariate values. We show that our model exhibits state-of-the-art performance on popular survival data sets and demonstrate its efficacy in a multi-state setting.
\end{abstract}

\section{Introduction}\label{sec:intro}
Time-to-event analysis is of fundamental importance in many fields where there is interest modelling event occurrence,  often in the presence of time-dependent missing outcomes (i.e. \enquote{censored} data). Examples include time-to-death analysis in medicine \citep{vigano2000survival}, failure of mechanical systems in engineering \citep{samaniego2007system} and financial risk \citep{dirick2017time}. If one event of interest is fatal, we speak of \emph{survival analysis}. 
For simplicity, most survival models only consider the binary case  where observations  transition from one non-fatal to a fatal state. 
The aim of many such models is to relate the arrival of events with observed characteristic information, e.g. model a patient's survival probability given their individual features. 
To date, the standard tool for survival analysis is the proportional hazards model, introduced in the seminal paper by \citet{cox1972regression}, which assumes a proportionality between the hazards for different values of the covariates of the model.

A first generalization of standard survival analysis considers multiple competing events, where all possible state transitions are fatal. For example, in the medical setting a patient can have multiple causes of death. For the incidence of these separate events, treating the other events as censored however leads to a bias due to misspecification of the at-risk population \citep{fine1999proportional} and the competing nature of the events has to be specifically modeled.

In recent years, with growing data availability and the advent of precision medicine, there has been increasing interest in a more refined modeling approach, taking into account multiple non-fatal states and more complicated relationships between all states 
\citep{rueda2019dynamics,gerstung2017precision,grinfeld2018classification,duffy1997,nicora2020continuous,Longini1989}. For example, in the case of acute myeloid leukemia, individualized genetic prediction based on a sophisticated multi-stage model was used to tailor personalized treatment within first complete remission \citep{gerstung2017precision}. In general, knowing which transitions and end-points are most likely to occur for a given patient enhances the clinician's ability for decision making.

The inclusion of covariates in common approaches for multi-state models usually requires making strong assumptions regarding the stochastic process and the dependence between model parameters and covariates.
We propose a general alternative approach, based on modeling the Kolomogorov forward equation of the underlying process using neural ordinary differential equations \citep{chen2018neural}. 
The use of neural networks provides considerably more model flexibility in comparison to previous approaches allowing the learning of expressive covariate relationships without placing any restrictive modeling assumption on the states. Directly modeling the underlying process gives us access to individual level cause-specific hazard rates and state occupation probabilities. A state augmentation akin to a memory process further enables us to move beyond the common Markov assumption in the state transition probabilities.
The method presented in this paper is, to the knowledge of the authors, the first neural network approach designed to explicitly handle multi-state survival models without using common simplifying assumptions and furthermore the first method to continuously model both time and probability distributions (or survival functions in the alive-dead survival case).
 
In summary, we demonstrate: 
\vspace{-1em}
\begin{itemize}
    \setlength\itemsep{-0.3em}
    \item a novel assumption free method for modeling survival outcomes that works with \emph{arbitrary numbers of states} with \emph{arbitrary topologies}, based on neural ordinary differential equations;
    \item state-of-the-art performance in survival analysis;
    \item superior performance in multi-state survival settings; 
    \item a variational training architecture for clustering multi-state survival outcomes with superior calibration of error intervals.
\end{itemize}

\section{Background and related work}

\subsection{Survival analysis}

Survival analysis is one of the simplest approaches for the study of time-to-event data. It categorizes the underlying states of interest as a dichotomous pair of a non-fatal and a fatal event, e.g. alive/dead for patients or functioning/failure for mechanical devices. Interest lies in the transition from the non-fatal to the fatal (absorbing) state.
Let $\boldsymbol{\tau}$ denote a random variable describing the time of the arrival of the fatal event. $\boldsymbol{\tau}$ can be flexibly modeled as the first jump of an inhomogeneous Poisson process with density function
\begin{equation*}
    f(t \mid \lambda) = \lambda(t) S(t),
\end{equation*}
where $\lambda$ denotes the \emph{hazard function} and $S(t) = \exp\big(-\int_0^t \lambda(s)\mathrm{d}s\big)$ is the \emph{survival function}.
In many cases, e.g. for patient data in clinical trials or for observational data, some of the participants will drop out at an earlier stage than the time of conclusion of the study. This gives an ambiguous meaning to the observed time points, $t$, which is a fatal-event if there is no censoring $(\delta=1)$ or a drop-out $(\delta=0)$. In the latter case the only information available is that $\boldsymbol{\tau} > t$ which has probability $\mathbb{P}(\boldsymbol{\tau} > t) = S(t)$. This is a case of \emph{right-censoring}.
Assuming independence of the censoring process the likelihood contribution of an individual $i$ is
\begin{align}
    \mathcal{L}_i =\lambda(t_i)^{\delta_i}S(t_i).
\end{align}

The most widely used tool to obtain the influence of covariates on the survival function $S(t)$ is the Cox proportional hazards model. This method is a semi-parametric method for the hazard function $\lambda(t)$, which is modeled as $\lambda(t) = \lambda_0(t) \exp\left(\bm{\beta}^T \bm{x}\right)$, where ${\bm \beta}$ are coefficients for the covariates $x$ and $\lambda_0(t)$ is a baseline hazard directly estimated from the data. Both the linear nature of the model as well as the proportional hazards assumption are often violated in practice.

Many extensions of the Cox proportional hazards model have been proposed, aiming to relax one or both of those assumptions. This includes models using the Cox model structure, but extending it to non-linear features or non-proportional hazards, e.g. by modeling $\lambda(t)=\lambda_0(t) \exp\left(f_\theta(x)\right)$ with $f_\theta(x)$ being a deep neural network, or $\lambda(t)=\lambda_0(t) \exp\left(f_\theta(x,t)\right)$ (continuous time models) \citep{katzman2018deepsurv,kvamme2019time};  approaches using MLPs \citep{lee2018deephit} or recurrent neural networks \citep{giunchiglia2018rnn,ren2019deep} for every time step (discrete time models); Gaussian Process models \citep{alaa2017deep,fernandez2016gaussian} or generative adversarial networks (GANs) \citep{chapfuwa2018adversarial}.

\subsection{A progressive three-state survival model}
The aim of multi-state models is a more granular analysis of time-to-event phenomena, where common binary outcomes (i.e. health/death) can not adequately describe real observations. A simple extension of a traditional binary survival model observing the time to a fatal event is the addition of an intermediate state, illness, which could denote the appearance of symptoms or, more generally, some non-fatal disease progression event. Such models and their state-space can be described by a directed graph as shown in \autoref{fig:ill_death}.
Processes that evolve continuously over time where observational units (like patients) move between states are referred to as continuous time, finite state space Markov processes.  
Such a Markov process is completely characterized by the (matrix of) state transition probabilities for all tuples of states $(i, j)$ and all tuples of time points $(s, t)$
\begin{equation*}
    P_{ij}(s,t) = \mathbb{P}(Y(t)=j \mid Y(s)=i),
\end{equation*}
where $Y(s)$ denotes the state of an individual at time $s$.

\begin{figure}[t]
\centering
\subfigure[Competing risks model.]{\begin{tikzpicture}[arsenic, scale=0.45]
      \draw[thick, fill=arsenic, rounded corners=2pt] (-2.7,-0.7) rectangle (2.7,0.7);
      \node[text=azure] at (0,0) {\circleds{\textsf{1}} \textsf{Health}};
      \draw[thick, fill=arsenic, rounded corners=2pt] (5,1.3) rectangle (9.9,2.7);
      \node[text=azure] at (7.3,2) {\circleds{\textsf{2}} \textsf{Cause A}};
      \draw[thick, fill=arsenic, rounded corners=2pt] (5,-1.3) rectangle (9.9,-2.7);
      \node[text=azure] at (7.3,-2) {\circleds{\textsf{3}} \textsf{Cause B}};
      \draw [->,line width=2pt,>=latex] (0, 0.5) -- (5, 2);
      \draw [->,line width=2pt,>=latex] (0, -0.5) -- (5, -2);
      \node[rotate= 15] at (3.1,2) {$\lambda_{12}(t)$};
      \node[rotate=-15] at (3.1,-2) {$\lambda_{13}(t)$};
      %\node at (7,0) {\vdots};
    \end{tikzpicture}\label{fig:comp_hazard}}
\subfigure[General multi-state model.] {\begin{tikzpicture}[arsenic, scale=0.45]
      \draw[thick, fill=arsenic, rounded corners=2pt] (-2.7,-0.7) rectangle (2.7,0.7);
      \node[text=azure] at (0,0) {\circleds{\textsf{1}} \textsf{Health}};
      \draw[thick, fill=arsenic, rounded corners=2pt] (5,1.3) rectangle (9.9,2.7);
      \node[text=azure] at (7.2,2) {\circleds{\textsf{2}} \textsf{Illness}};
      \draw[thick, fill=arsenic, rounded corners=2pt] (5,-1.3) rectangle (9.9,-2.7);
      \node[text=azure] at (7.2,-2) {\circleds{\textsf{3}} \textsf{Death}};
      \draw [->,line width=2pt,>=latex] (0, 0.5) -- (5, 2);
      \draw [->,line width=2pt,>=latex] (0, -0.5) -- (5, -2);
      \draw [->,line width=2pt,>=latex] (7.45, 1.3) -- (7.45, -1.3);
      \node[rotate= 15] at (3.1,2) {$\lambda_{12}(t)$};
      \node[rotate=-15] at (3.1,-2) {$\lambda_{13}(t)$};
      \node at (8.8,0) {$\lambda_{23}(t)$};
      %\node at (7,0) {\vdots};
    \end{tikzpicture}\label{fig:ill_death}}
\caption{Example graphs corresponding to the 2-state competing risks model (a) and the illness-death model (b), a popular multi-state model. Competing risks models are a special case of multi-state models that only have one non-absorbing state, whereas in general multi-state models can have arbitrary, even cyclical connections.
}
\end{figure}
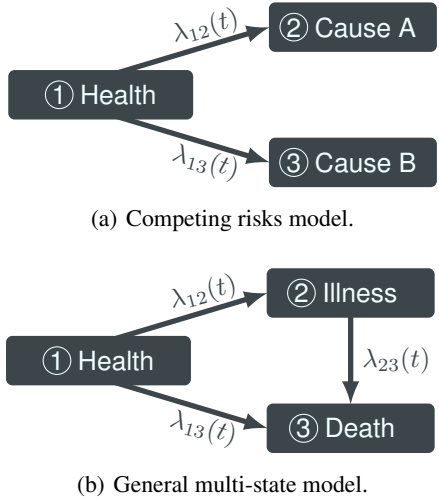

Describing the transition probabilities, and hence the likelihood, with a model that allows for flexible use of covariates, while allowing non-homogeneous state evolution is challenging.
The standard tool is a Markov multi-state model, where a Cox proportional hazards model is applied to each transition separately. 
The transition probabilities are estimated by assuming a Markov model for the transition through states \citep{mstate}. 
This framework has the disadvantages of the Cox proportional hazard model at each transition and additionally a Markov assumption for each state, together with the assumption that event times for different events are independent of each other, which is rarely given in practice. 

%In the case of partially observed progressive models there have been extensions to hidden Markov models \citep{msm} and attentive state space models~\citep{alaa2018forecasting}, with hidden, unsupervised state evolution. While these models are extremely interesting in their own right, they aim to solve a somewhat different problem. However, an application of our approach to model the hidden state dynamics for continuous time processes would be an interesting avenue for further research.

The conceptually more appealing approach of modeling the events as a Markov jump process, solving the Kolmogorov forward equation was introduced in \citep{titman2011kfespline}. However, the proposed B-spline basis for the hazard function does not generalize well to inclusion of covariates, as a separate Kolmogorov forward equation has to be fit for every realization of the covariates.
Recently, generalizations to the special case of competing risks models using Gaussian Processes \citep{alaa2017deep} and deep neural networks \citep{lee2018deephit} were proposed, however we are not aware of any literature considering an extension of such flexible methods to the setting of general multi-state models.

\section{Multi-state survival models}
\label{sec:multi_state}
Mathematically, multi-state models are defined as a continuous time stochastic process $\{Y(t); 0 \leqslant t \leqslant T\}$ taking values in a finite state space $\mathsf{Y}=\{1,...,S\}$ over known time horizon $T > 0$. Such processes are often called (Markov) jump processes. In the following we will describe the likelihood function and its relation to the Kolmogorov forward equations \citep{kolmogoroff1931analytischen, feller1949}.

\subsection{Markov Jump Processes and the Kolmogorov Forward Equations}
If the stochastic process $\{Y(t); 0 \leqslant t \leqslant T\}$ is Markovian it can be fully characterized by its transition kernel from time $s$ to $t$, denoted $P(s,t) \in \mathbb{R}^{S\times S}$ with elements
\begin{equation*}
    P_{ij}(s,t) = \mathbb{P}(Y(t)=j \mid Y(s)=i), 
\end{equation*}
$i=1,\ldots, S, j=1, \ldots, S$.
As shown by \citet{kolmogoroff1931analytischen} such transition kernels follow a set of differential equations 
\begin{align}
    \frac{\mathrm{d}P_{ij}(s,t)}{\mathrm{d}t} =  \sum\nolimits_{k} P_{ik}(s,t) \lambda_{kj}(t), 
\end{align}
$i = 1,\ldots, S, \quad j = 1, \ldots S$, called the \emph{Kolmogorov forward equations}.
% \footnote{Kolmogorov's (Russian: \foreignlanguage{russian}{Колмогоров}) 1931 paper was published in German using the old German transliteration of his name
% \enquote{Kolmogoroff}. However, the common spelling in the English literature is \enquote{Kolmogorov}, e.g. in \enquote{Kolmogorov forward equations}. In this paper we use both transliterations depending on which is contextually appropriate.}
\subsection{Multi-state likelihood with known transition times}

For each individual we will observe the process $Y(t)$ in the form of discrete jumps over the relevant time interval $[0, T]$.

In this setting, a single observation consists of a set of $m$ time-indexed states ${y(t_1), \ldots, y(t_m)}$. The likelihood is given by 
\begin{align*}
    & P\left(y(t_1), \ldots, y(t_m); \theta\right) \\
    & \quad =  P\left( y(t_1) \right)\prod\nolimits_{j=2}^{m} T\left(y(t_{j-1}), y(t_{j}) \right),
\end{align*}
where the transition probability, $T$ is
\begin{equation}\label{eq:exact_time_likelihood}
\begin{aligned}
    & T\left(y(s), y(t) \right) = \\
    & \,\,\underbrace{P_{y(s)y(s)}\left(s, t; \theta \right)}_{\text{stay at $y(s)$ from $s$ to $t$}} \quad \times \underbrace{\lambda_{y(s)y(t)}(t \mid \theta).}_{\text{jump from $y(s)$ to $y(t)$ \emph{at} time $t$}}   
\end{aligned}
\end{equation}
The value $\theta$ denotes all free model parameters and $P\left( y(t_1) \right)$ the probability to be in the initial state. To ensure the likelihood is well-defined for $m=1$, we define an empty product as $\prod_{j=2}^1 = 1$. 
The full likelihood for a set of $n$ observations is thus given by
\begin{equation*}
    \mathcal{L}(\theta ; \mathcal{Y}) = \prod\nolimits_{i = 1}^n 
    P\left(y_i(t^i_1), \ldots, y_i(t^i_{m_i}); \theta\right),
\end{equation*}
with $\mathcal{Y} = \{y_1, \ldots, y_n\}, y_i = \{y_i(t^i_1), \ldots, y_i(t^i_{m_i})\}, i = 1, \ldots, n$.
Under the Markov assumption the evolution of the transition probabilities in the likelihood is governed by the Kolmogorov forward equation
\begin{align}
    \frac{\mathrm{d}P_{ij}(s,t)}{\mathrm{d}t} =  \sum\nolimits_{k} P_{ik}(s,t) \lambda_{kj}(t), 
\end{align}
$i = 1,\ldots, S, j = 1, \ldots S$, where the Markov property is evident by the fact that the instantaneous transition rates $\lambda_{ij}(t)$ are only dependent on the time $t$. 

\subsection{Multi-state likelihood with unknown transition times (interval censoring)}
In the previous section, we have assumed that the exact time of the transitions are known, leading to the likelihood transition as shown in \eqref{eq:exact_time_likelihood}.
This is not always the case, but instead it might only be known that the transition happened between two time points and the likelihood needs to be adjusted accordingly. Instead of computing \eqref{eq:exact_time_likelihood}, we then need to substitute
$T\left(y(s), y(t) \right) = P_{y(s)y(t)}\left(s, t; \theta \right).$

\subsection{Right-censoring}
\label{sec:rightcensoring}
As alluded to earlier, censoring (to account for missing observations) is common in survival models and requires an adjustment of the likelihood function. 
Assuming independence of the censoring process, we observe $\{x_i,y_i,\delta_i; j=1,\ldots,m_i, i=1,\dots,n\}$, where $x_i$ are individual covariates or regressors, $m_i$ is the number of transitions the individual $i$ is going through and $y_i$ are as above or the state at time of last contact (censoring time). Censoring is indicated by $\delta_i = 0$ whereas we write $\delta_i = 1$ if the event is observed. The corresponding likelihood can then be written as 
\begin{align*}
    & \mathcal{L}(\theta; \mathcal{Y}) =  \prod\nolimits_{i=1}^n P\left( y_i(t^i_1)  \right) \times \\
    & \prod\nolimits_{j=2}^{m_{i-1}} P_{y_i(t^i_{j-1})y_i(t^i_{j-1})}\left(t^i_{j-1}, t^i_{j}; \theta \right) \lambda_{y_i(t^i_{j-1})y_i(t^i_j)}(t^i_j \mid \theta) \\ &\qquad\times P_{y_i(t^i_{m_i-1})y_i(t^i_{m_i-1})}\left(t^i_{m_i-1}, t^i_{m_i}; \theta \right) \\
    &\qquad \times \left(\lambda_{y_i(t^i_{m_i-1})y_i(t^i_{m_i})}(t^i_{m_i} \mid \theta)\right)^{\delta_i}.
\end{align*}

\begin{rem}[Left-truncation]
We note that the above likelihood also allows for possible left-truncation, where a patient is added at a later time, but is known to be in a certain state up until this point, for example to control for immortal time bias.
\end{rem}

\section{\textsc{survNode}: neural ODEs for multi-state modelling}

\subsection{Model definition}
We define our model by parameterizing the Kolmogorov forward equations directly. This is achieved by modeling the instantaneous transition rate matrix $\boldsymbol{Q}(t)$ with a neural network. Ensuring the conservation of probability requires that the elements $(\bm{Q})_{ik} = \lambda_{ik}$ of the transition rate matrix $\bm{Q}$ need to fulfill
\begin{equation*}
    \sum\nolimits_k \lambda_{ik}(t) = 0.
\end{equation*}
This restriction can be implemented by modelling $\lambda_{ij}(t), i\neq j$ through the neural network and set $\lambda_{ii}(t) = - \sum_{k \neq i} \lambda_{ik}(t)$. 
As we need the transition rates to be larger than $0$, we use a \texttt{softplus} activation on the last layer of the network.

To incorporate the covariates we use the following approach. Instead of only modeling the Markovian transition rate $\boldsymbol{Q}(t)$, we incorporate the history of the evolution and the covariate state of individual $i$ as $\boldsymbol{Q}(t,\mathcal{H}(t))$. For this we introduce auxiliary memory states $m(t)$, governed by the differential equation 
\begin{align*}
    \frac{\mathrm{d}m_i}{dt} = M_i(t,\bm{P}(t),\bm{m}(t)).
\end{align*}
The initial conditions are encoded by the covariates of the patient $m(0) = f(x)$, where $f$ is given by a neural net. We can then obtain the system of coupled ODEs
\begin{align*}
    \frac{\mathrm{d}P_{ij}(0,t)}{\mathrm{d}t} &=  \sum\nolimits_k P_{ik}(0,t) \lambda_{kj}(t,\bm{P}(0,t),\bm{m}(t),x) \\
    \frac{\mathrm{d}P_{ij}(s,0)}{\mathrm{d}s} &= - \sum\nolimits_{k} \lambda_{ik}(s,\bm{P}(0,s),\bm{m}(s),x) P_{kj}(s,0) \\
    \frac{\mathrm{d}m_i}{\mathrm{d}t} &= \textstyle M_i(t,\bm{P}(t),\bm{m}(t),x).
\end{align*}
where the second line is the Kolmogorov backward equation. 

\begin{table*}[ht]
  \caption{Benchmark of \snode. The results for the other models are taken from \citep{kvamme2019time}. Note that the results of competing methods are more hyperparameter optimized. Higher concordance and lower ibs and ibll are better. The best result is highlighted in bold.}
  \label{tab_1}
  \centering
    \begin{tabular}{c|c|c|c|c|c|c}
    Model  & {\small\textsc{metabric}} & {\small\textsc{metabric}} & {\small\textsc{metabric}} & {\small\textsc{support}} & {\small\textsc{support}} & {\small\textsc{support}}\\
    & c & ibs & ibll & c & ibs & ibll\\ 
    \hline
    Cox-PH\citep{cox1972regression}& 0.628 & 0.183 &-0.538 & 0.598 & 0.217 & -0.623 \\
    DeepSurv \citep{katzman2018deepsurv} & 0.636 & 0.176 &-0.532& 0.611 & 0.214 & -0.619\\
    Cox-Time \citep{kvamme2019time} & 0.662 & 0.172 &-0.515 & 0.629 & 0.212 & -0.613\\
    DeepHit \citep{lee2018deephit} & {\bf 0.675} & 0.184 &-0.539& {\bf 0.642} & 0.223 & -0.637\\
    RSF \citep{ishwaran2008random} & 0.649 & 0.175 & -0.515 & 0.634 & 0.212 & -0.610\\
    \rowcolor{Gray}
    \snode  & $0.667$ & \bf{0.157} & \bf{-0.477} & 0.622 & {\bf 0.198} & {\bf -0.580}
    \end{tabular}
\end{table*}

\subsection{Implementation details}
% \begin{prop}
% We show that $P(s,0)=P^{-1}(0,s)$
% \end{prop}
Using that $P(s,0)=P^{-1}(0,s)$ and therefore $\sum_k P_{ik}(s,0) P_{kj}(0,t) = P_{ij}(s,t)$, we obtain $P_{ij}(s,t)$ at any $s$ and $t$.

We model both $\lambda_{ij}(t,\bm{P}(0,t),\bm{m}(t),x)$ and $M_i(t,\bm{P}(0,t),\bm{m}(t),x)$ with one neural network $g(t,\bm{P}(0,t),\bm{m}(t),x)$, where the first $q$ (number of non-zero off-diagonal elements of $\bm{Q}$) outputs of the last layer are passed through a \texttt{softplus} non-linearity. This generalizes the approach in \citep{chen2018neural} and shares some conceptual ideas with \citep{jia2019neural}. Another interpretation of the memory states is the augmentation of the neural ODE with additional states as seen in \citep{dupont2019augmented}. The algorithm is shown in the appendix. %Algorithm~\ref{alg:snode}. 

Following \citep{massaroli2020dissecting}, we furthermore add an $L_2$ loss term for the time evolved memory states at the maximum time of the training batch, which can be seen as some modification of minimizing a Lyapunov exponent such that comparable initial values produce comparable survival.

With this model, we also have direct access to the hazard rate (the instantaneous risk for a given transition) over time. By predicting the hazard rates for the possible realizations of e.g. a binary feature over time and taking the ratio, we can derive a personalized predictive score for the influence of that feature on the transition rates between states. Such time-dependent hazard ratios are critical for predicting treatments or identifying biomarkers in a clinical setting.

Due to the encoding of the covariates into the initial values of the memory states this model can naturally extend to include features based on longitudinal data, text data or imaging data by encoding the initial values with recurrent neural network layers, natural language processing layers or convolutional layers and training those at the same time. Time dependent covariates can similarly be incorporated with recurrent neural networks before every new measurement of the time dependent feature.
The model is implemented in PyTorch \citep{pytorch} using the torchdiffeq \citep{chen2018neural} package.

\section{Variational \snode: modeling uncertainty}

To obtain a quantification of model uncertainty, we further extend the model to a variational setting by introducing latent variables. Instead of maximum likelihood estimation, the objective will be the variational free energy or evidence lower bound \textsc{elbo}.
The variational model assumes the existence of a latent state $z$, which replaces the role of the memory state $\bm{m}(0)$ above, such that $\mathcal{L}(\theta; \mathcal{Y}, \boldsymbol{z})$ does not depend on the covariates $\boldsymbol{x}$ given $\boldsymbol{z}$.
The objective is then
\begin{align*}
    &\textsc{elbo}(\theta, \mathcal{Y}) = \\ &\mathbb{E}_{q(\boldsymbol{z} \mid t, \boldsymbol{x})}\big[\log \mathcal{L}(\theta; \mathcal{Y}, \boldsymbol{z}) \big] - \mathcal{D}_{KL}\big(q(\boldsymbol{z} \mid t, \boldsymbol{x}) \,\|\, p(\boldsymbol{z} \mid \boldsymbol{x})\big) %\\
\end{align*}

where we model the variational distribution $q(\bm{z}|t,\bm{x})$ and the prior $p(\bm{z}|\bm{x})$ as
\begin{align*}
    q(\bm{z}|t,\bm{x}) &= \mathcal{N}\big(\mu_q(x,t),\textrm{diag}(\sigma_q^2(x,t))\big) \\
    p(\bm{z}|\bm{x}) &= \mathcal{N}\big(\mu_p(x),\textrm{diag}(\sigma_p^2(x))\big)
\end{align*}
with neural networks for $\mu_q$, $\mu_p$, $\sigma_q$, and $\sigma_p$, encoding the covariates into the latent space. 

For prediction, we obtain realizations of the transition matrix $P_{ij}(0,t)$ by repeated sampling from the prior and taking the mean as well as the $95\%$ credible interval
\begin{align*}
    z(0) &\sim p(z\mid x), \\
    P_{ij}(0,t\mid z) &= \texttt{ODEsolve}((\mathbb{1},\mathbb{1},z(0)),\texttt{KFE\_KBE},(0,t)).
\end{align*}
Details can be found in the supplementary material.

\section{Experiments}

\subsection{Survival: benchmark of model}
To benchmark our proposed model against various survival frameworks, we examine the performance of \snode{} on the \textsc{metabric} breast cancer data set \citep{curtis2012genomic, pereira2016somatic}, as well as the \textsc{support} data set \citep{knaus1995support}. 

In order to measure the performance of our model we score our survival prediction using the following standard measures \citep[see][for precise definitions]{kvamme2019time}:
\vspace{-1em}
\begin{itemize}
    \setlength\itemsep{-0.3em}
    \item \emph{Concordance}. The concordance or c-index is the relative fraction of \emph{concordant} sample pairs, where a pair of samples is \emph{concordant} when prediction and observation have the same order. The c-index therefore measures a models discriminatory power. 
    \item \emph{(Integrated) Brier score}. The Brier score at time $t$ measures the (squared) difference between the forecasted outcome probability and the actually observed value providing a measure of calibration and discrimination. The integrated Brier (ibs) summarizes the Brier score over all time points.
    
    \item \emph{(Integrated) Binomial log-likelihood} The binomial log-likelihood is based on the binary cross entropy for the probability of a patient to still be alive at time $t$. As the Brier score, it can subsequently be integrated over time and similarly measures both discrimination and calibration of a method.

\end{itemize}
When censoring is present, the formulas have to be adjusted accordingly, see e.g. \citep{spitoni2018prediction} for the Brier score in the multi-state case. 
The Brier score, in contrast to the concordance, is a \emph{proper scoring rule}, meaning that predicting the true outcome probabilities provides maximal reward or minimal loss, thus making it the preferable measure in our applications.

We present the benchmark concordance (c)\citep{antolini2005time}, integrated Brier score (ibs)\citep{brier1951verification}, as well as the integrated binomial log-likelihood estimator (ibll) with five-fold cross validation in \autoref{tab_1}.

\begin{figure}[th]
    \centering
    \subfigure[\snode{}]{
    \includegraphics[width=0.4\textwidth]{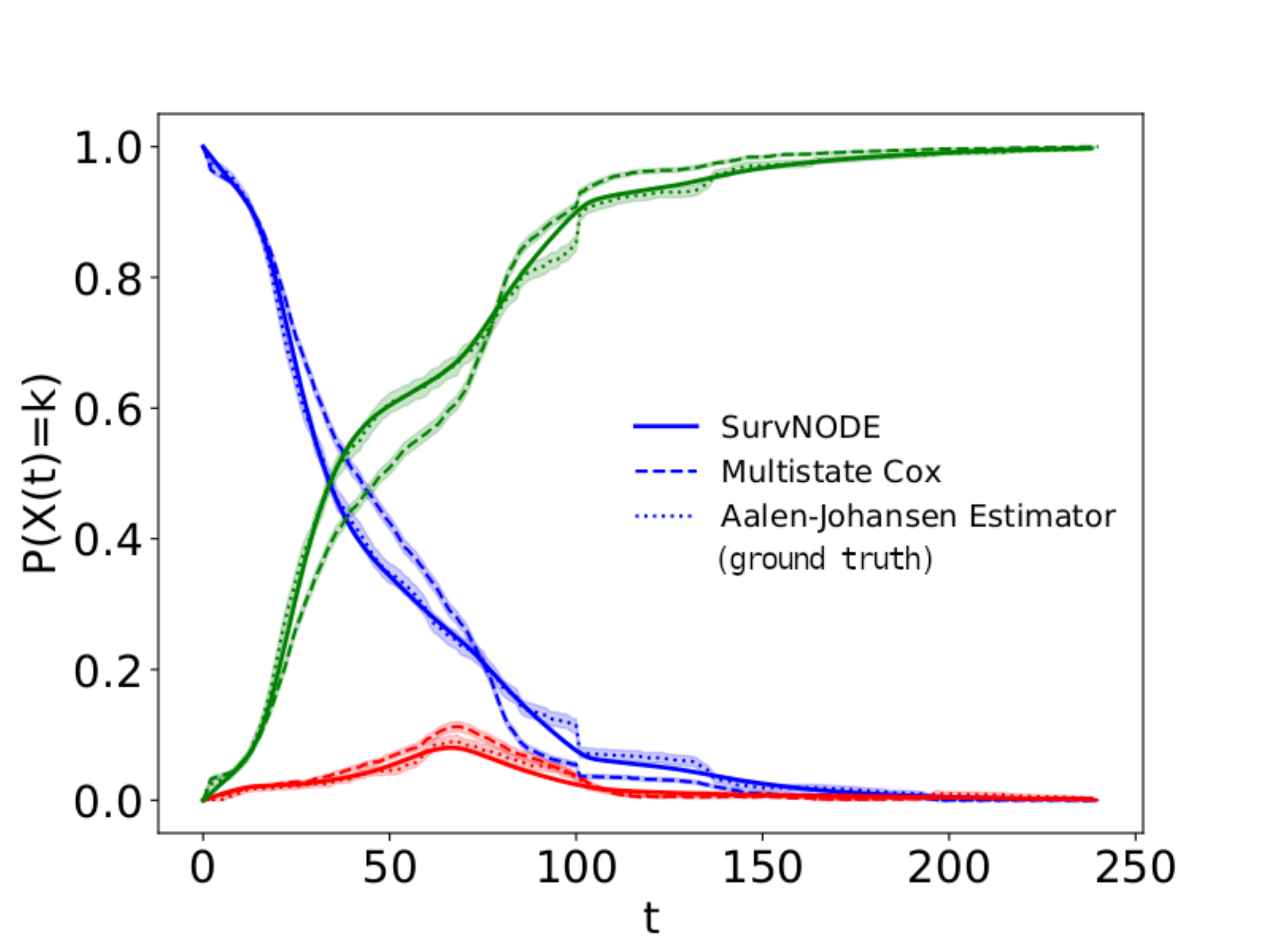} \label{fig:multistate_coxcompare}}
    \subfigure[Variational \snode{}]{
    \includegraphics[width=0.4\textwidth]{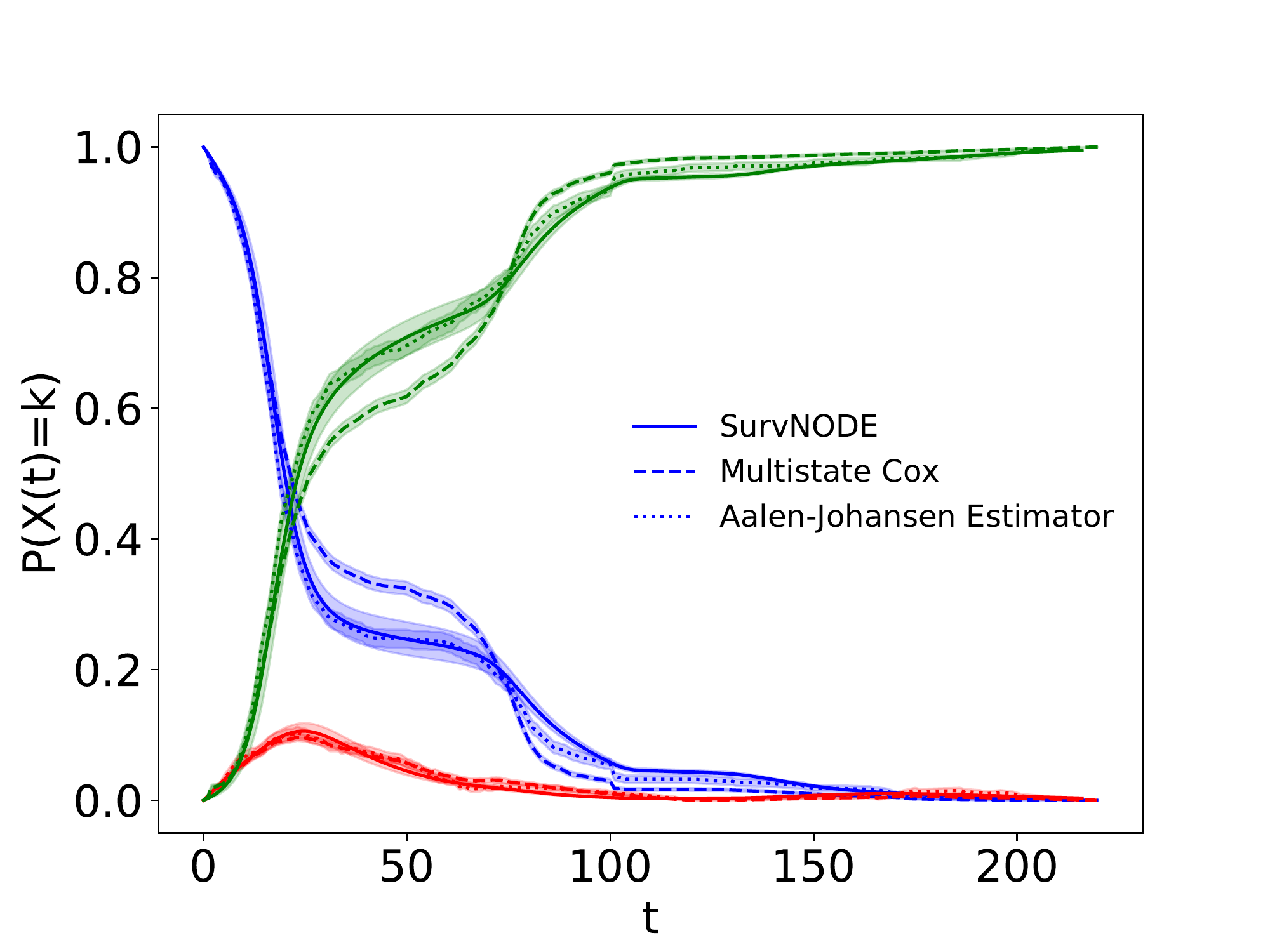}
    \label{fig:multistate_coxcompare_bayes}
    }
    \caption{Plot of probabilities for being in the different states of the illness-death model, indicated by the different colors. Blue corresponds to \enquote{Health}, red to \enquote{Illness} and green to \enquote{Death}. We compare to the cause-specific multi-state Cox model in addition to the non-parametric Aalen--Johansen estimator as ground truth. \emph{In all cases \snode{} reliably predicts the population level probabilities where the multi-state Cox model shows significant deviations.}}
   
\end{figure}

As can be seen in \autoref{tab_1}, our method outperforms all competitors in terms of the proper scoring rules (ibs, ibll) while attaining state-of-the-art discriminative performance as measured by the concordance index. While DeepHit slightly outperforms our model in terms of concordance, this comes at the price of a significantly worse Brier score, even compared to a Cox model.  An illustration of this is shown in the supplemental material. Although concordance is a common figure of merit, for clinical applications of predictive models for precision medicine, it may be as or more important to have a well calibrated probability for the event to provide the clinician with unbiased decision support~\citep{graf1999ibsvsconc,hand1997construction,hilden1978probs}.

For diagnostic tests in a clinical setting, for instance, a low integrated Brier score corresponds to a better predictive value of the diagnosis, meaning the probabilities of a positive or negative diagnosis are closer to the real underlying probabilities. A higher concordance, on the other hand, will give a better classification into positive diagnosis or negative diagnosis \citep{graf1999ibsvsconc}. These metrics are distinct in their diagnostic ability of the model prediction and there can be tradeoffs between maximising concordance vs maximizing Brier score.

\subsection{Multi-state survival}
A major advantage of \snode{} is that is applies equally to an arbitrary multi-state survival model distinguishing it from classical (binary) survival models. 

To show the efficacy of our model in the multi-state setting and to visualize the advantage of our model over the only other existing method for the general multi-state setting\footnote{For some simple extension of survival models, for example, the competing risks model (\autoref{fig:comp_hazard}) other methods exist. A comparison to such a model is shown in the appendix.}, we investigate the models performance on an aggregate and individual level. \vspace{-10pt}

\paragraph*{Population level comparison}
To judge the overall population level performance of \snode{} we compare with a non-parametric estimator for an illness-death model (see \autoref{fig:ill_death}). 
In the multi-state setting a population mean for the probabilities in each state can be obtained with the Aalen--Johansen estimator \citep{aalen1978empirical}. 
As a baseline, we simulate a data set with proportional hazards violation using the \textbf{\texttt{coxed}} \textsf{R} package \citep{harden_kropko_2019}(see supplementary material).
We compare our model with the standard tool in the multi-state survival literature, which is fitting a Cox proportional hazard model to each transition, treating the other events as censored. 
Importantly, this assumes independence between the occurring events, which while true for the simulated data set is often not the case in real world scenarios. We use the \textsf{R} package \textbf{\texttt{mstate}} \citep{mstate} to obtain the state probabilities at each time. 

The comparison can be seen in \autoref{fig:multistate_coxcompare}, where we plot the probabilities for the occupation of every state over time for both models together with the ground truth, estimated by the Aalen--Johansen estimator. We see a clear advantage of our model over the cause-specific Cox model, being more accurate for all transitions at all times than the multi-state Cox model.

\begin{figure}[h]
    \centering
    \includegraphics[width=0.4\textwidth]{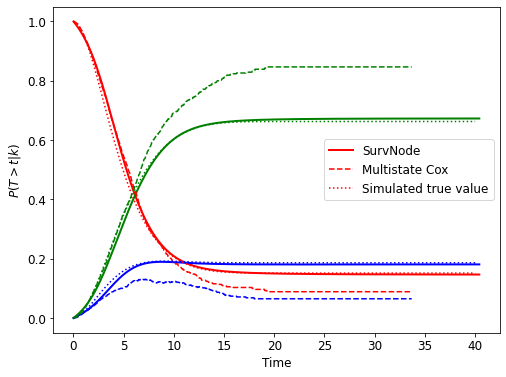}
    \vspace{-10pt}
    \caption{Probability distribution to be in each of the three states of the illness death model for a random individual patient. The dotted lines are the real underlying probability distributions obtained from the simulated hazard rates, which in contrast to the multi-state Cox model are almost perfectly recovered by \snode{}.}
    \label{fig:multi_sim_probs}
\end{figure}

\begin{figure}[h]
    \centering
    \includegraphics[width=0.4\textwidth]{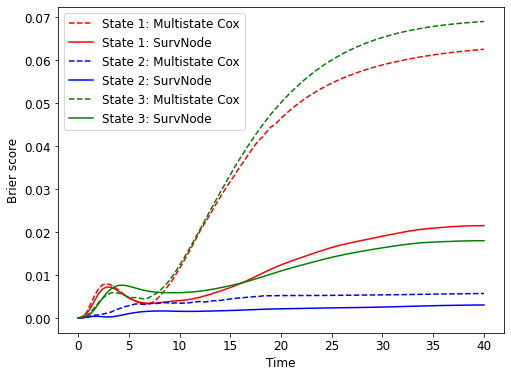}
    \vspace{-10pt}
    \caption{Brier score over time of \snode{} and the multi-state Cox model. We clearly see a superior calibration of \snode{} as compared to the multi-state Cox model.}
    \label{fig:multi_sim_brier_exact}
\end{figure}

\paragraph{Precision medicine: accurate individual level prediction}
With highly flexible neural network survival models such as \snode{}, we hope to provide a more accurate individual level prediction.  
We test the individual level performance by simulating another illness-death data set directly from a Markov jump process with multiple covariates (see appendix), again assuming a proportional hazards violation. We quantify the calibration of both \snode{}, as well as the multi-state Cox model on an individual patient level. 
As the underlying true probability distribution for each patient is known (a probability distribution for a random individual patient is plotted in \autoref{fig:multi_sim_probs}), we can directly calculate the Brier score as the squared difference in real underlying and predicted probabilities to be in each state over time. The Brier score is shown in \autoref{fig:multi_sim_brier_exact} and we  see that \snode{} provides superior estimates of the probability distributions with the predicted probabilities of the multi-state Cox model on average having a severely worse calibration than \snode{} at most times.  

\paragraph*{Real data example}
Lastly, we compare the two models using a multi-state generalization of the Brier score \cite{spitoni2018prediction} on data by the European Society for Blood and Marrow Transplantation (ESBMT) \cite{fiocco2008reduced}. The multi-state model is a more complicated six-state model and is shown in \autoref{fig:ebmt}.

The Brier scores for the state probabilities of the six states for both \snode{}, as well as the multi-state Cox model, are shown in \autoref{fig:brier_real}. In this case we see a comparable Brier score, and therefore a matching prediction. This is likely due to the absence of  interactions between the covariates in the data and negligible proportional hazards and Markov violations. This demonstrates, however, that the increased flexibility  of \snode{} does not hurt performance when the assumptions of the simpler Cox model are met. In contrast, we are able to produce stable predictive results compared with the current standard tool in multi-state survival analysis.

\begin{figure}[h]
    \centering
    \includegraphics[width=0.9\linewidth]{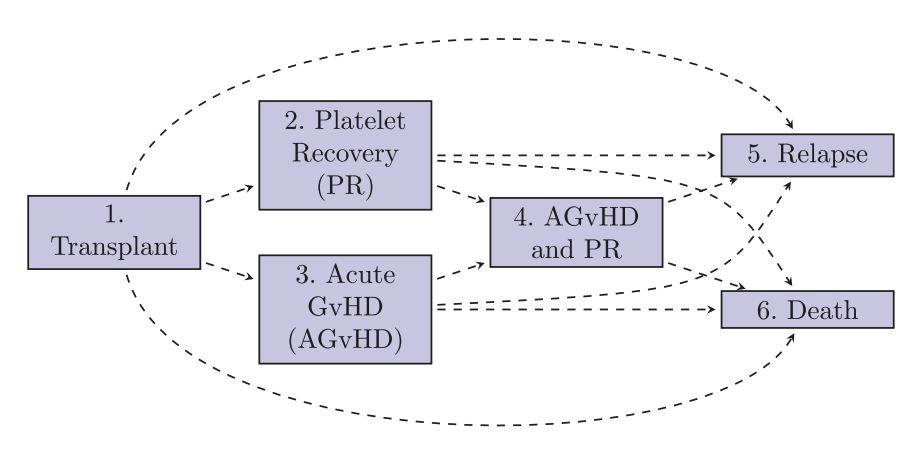}
    \vspace{-15pt}
    %\caption{\snode{}}
    \caption{Diagram for possible transitions in the data by the ESBMT. The hazard rates for each of the 12 transitions have to be modeled.}
    \label{fig:ebmt}
\end{figure}

\begin{figure}[h]
    \centering
    \includegraphics[width=0.8\linewidth]{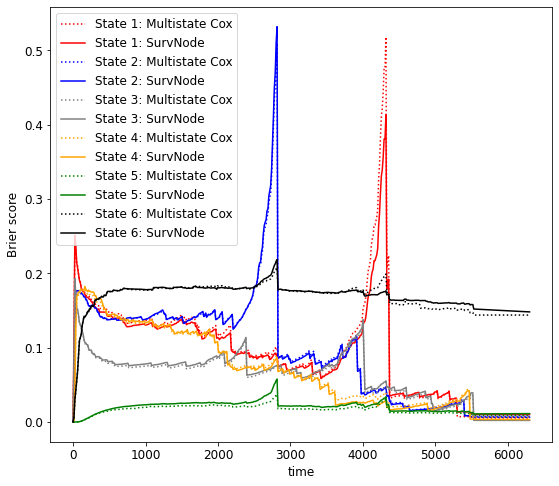}
    \vspace{-15pt}
    %\caption{\snode{}}
    \caption{Brier score over time for both multi-state Cox model as well \snode{} for each of the six states of the multi-state model.}
    \label{fig:brier_real}
\end{figure}

\begin{table*}[t]
\centering
\caption{Comparison of \snode and the Cox proportional hazard model in terms of calibration and concordance.}
\label{tab_3}
\begin{tabular}{c|c|c|c}
Model & calibration & concordance \\ \hline
Cox proportional hazards model \citep{cox1972regression}& $0.250\pm0.015$ & $0.6244\pm0.0064$  \\
\rowcolor{Gray}
\snode{} (this paper) & $\mathbf{0.749\pm0.071}$  & $\mathbf{0.6396\pm0.0097}$
\end{tabular}
\vspace{-10pt}
\end{table*}

\subsection{Variational \snode{}}

\paragraph{Benchmark}
As a first step of analysis of the latent multi-state survival model, we benchmark the model against other models for the special case of survival analysis on the \textsc{metabric} data set. Without any hyper-parameter tuning and ad-hoc parameter choice, we obtain a concordance of $c=0.646$, integrated Brier score of $0.170$ and integrated binomial log-likelihood of $0.503$. As such the variational \snode{} is better calibrated than all other available models with competitive discrimination performance (not including plain \snode{}). We can visualize the prediction and confidence interval by again comparing to the Aalen--Johansen estimator in the simulated illness-death model in \autoref{fig:multistate_coxcompare_bayes}. For this we have trained the variational \snode{} on a training set with early stopping on a validation set and compare the prediction for the possible covariates with the Aalen--Johansen estimators obtained on a test set.
\vspace{-10pt}

\paragraph{Calibration of the credible intervals}
While our model captures the non-parametric estimator by visual inspection, we seek to quantify the calibration performance in simulations where the ground truth is known. Again using the \textsf{R} package \textbf{\texttt{coxed}}, we simulate a survival data set with three covariates. From the \textbf{\texttt{coxed}} package we also extract the underlying individual survival probabilities. To estimate calibration of the error intervals, we therefore calculate the average of fraction of times the true survival probabilities we sample from lie within the 95\% credible interval. We compare the calibration of our model to the prediction from a Cox proportional hazards model using the \textsf{R} \textbf{\texttt{survival}} package~\citep{survival-package}, which implements the calculation of standard errors. For one random realization of the simulated data we perform a five fold cross validation in \autoref{tab_3}.
We find that our model produces more consistent and better calibrated error intervals than the Cox proportional hazards model.
\vspace{-10pt}

\paragraph{Clustering of the latent space}
An additional useful feature of the latent variable model can be found by inspection of the latent space of the model. We again simulate an illness-death model data-set with \textbf{\texttt{coxed}}, using nine covariates. We again run the variational \snode~model with early stopping using a validation set and then inspect the latent space for the validation data. Using UMAP \citep{mcinnes2018umap-software} we identify five clusters (\autoref{fig:bayes_multistate_cluster_kmeans_km}). We examine the probabilities to be in each of the three states for each cluster in the validation data set using the non-parametric Aalen--Johansen estimator. As can be seen in \autoref{fig:bayes_multistate_cluster_kmeans_km}, the clusters are a meaningful unsupervised differentiation between patients and capture survival differences as well as differences in transitioning to the "Illness" state well. We can additionally obtain covariate effects associated with each cluster by using logistic regression. This feature has useful applications in a clinical setting, where identification of extreme survivors to a treatment while modeling other state transitions is of particular interest. Our approach is directly applicable to survival analysis, where methods for example based on LDA \citep{Chapfuwa_2020} were recently proposed to cluster the latent space, but generalizes those to the multi-state setting.
\vspace{-10pt}

\section{Conclusion}
We have introduced a general and flexible method for multi-state survival analysis based on neural ODEs and shown state-of-the art performance in the special cases of survival with a superior performance for Brier score and binomial log likelihood. In addition, we have demonstrated that \snode{} is capable of accurately recovering the hazard rates of a general multi-state model. Finally, a variational approach allows for the estimation of credible intervals and provides an interpretability aspect by introducing latent states.
% \vspace{-20pt}

\begin{figure}[H]
    \centering
    \subfigure[]{\includegraphics[width=0.45\textwidth]{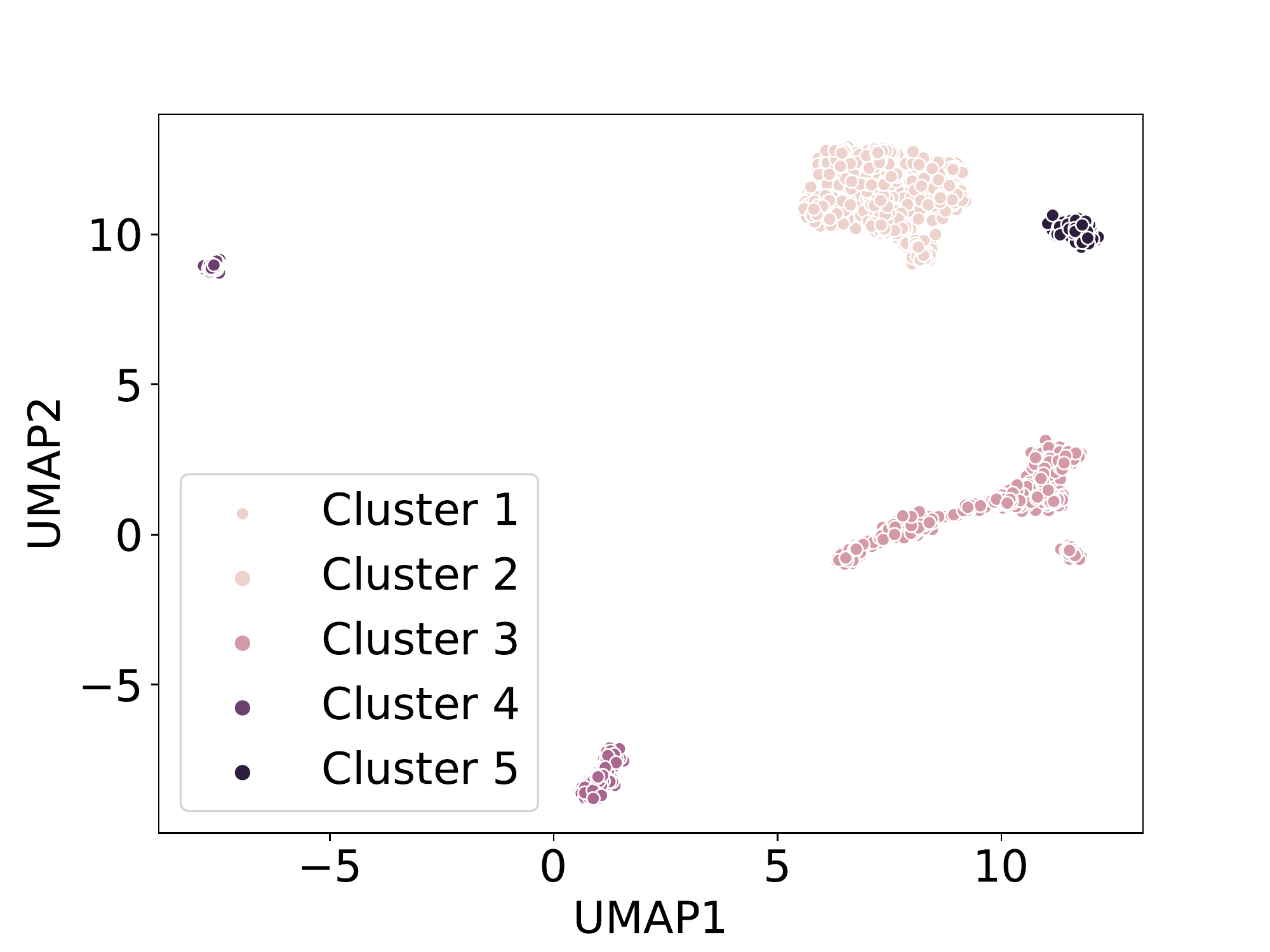}}
    \vspace{-2pt}
    \subfigure[]{\includegraphics[width=0.45\textwidth]{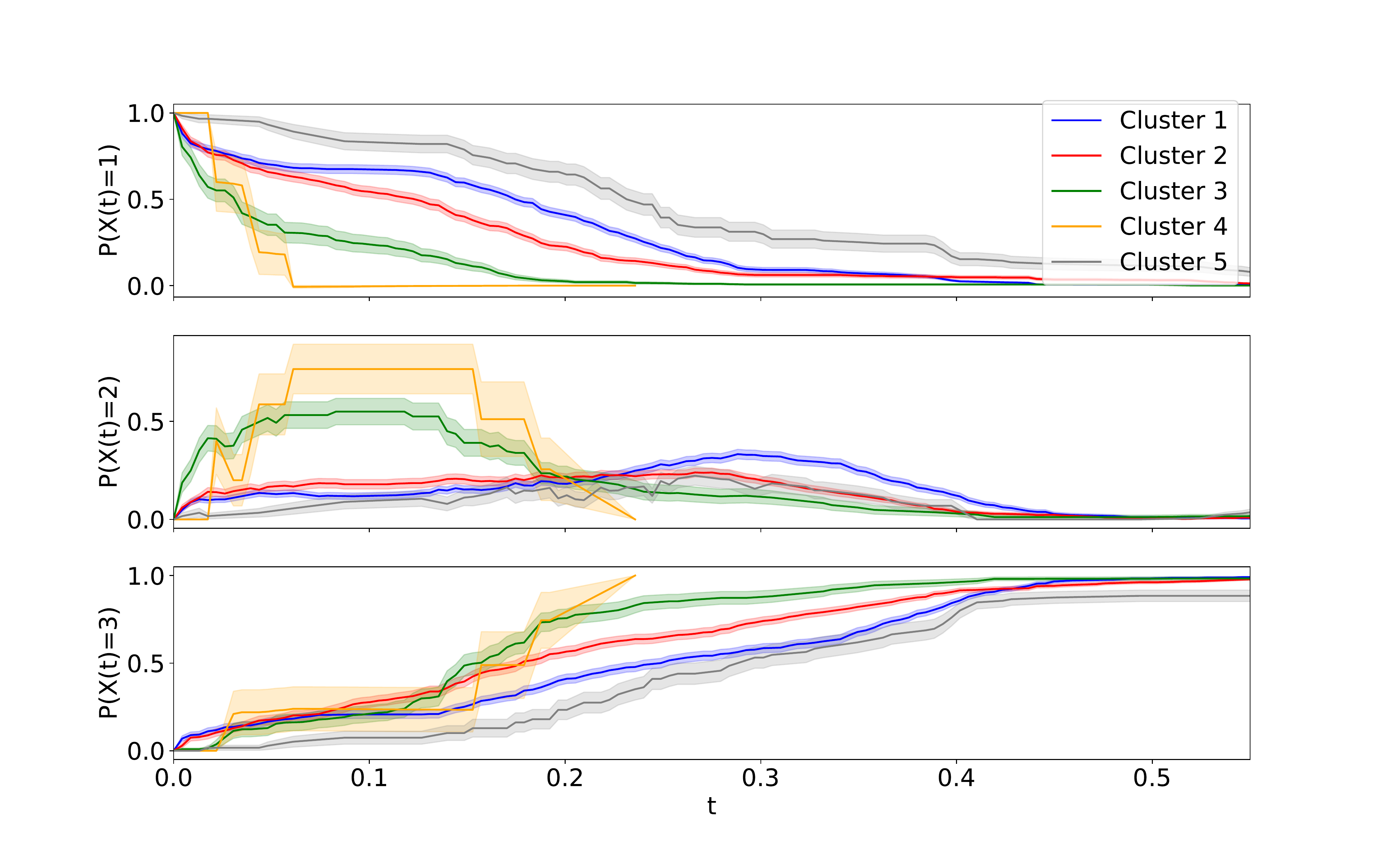}}
    \caption{The latent space of the variational \snode{} model shows meaningful clusters. The subset of patients in the clusters on the left are used in a non-parametric Aalen--Johansen estimator to obtain state occupation probabilities for all three states per cluster, which differ significantly. The first panel is the probability for the "Health" state, the second for the "Illness" and the third for the "Death" state.}
    \label{fig:bayes_multistate_cluster_kmeans_km}
\end{figure}
\clearpage

\bibliography{references}

\begin{thebibliography}{51}
\providecommand{\natexlab}[1]{#1}
\providecommand{\url}[1]{\texttt{#1}}
\expandafter\ifx\csname urlstyle\endcsname\relax
  \providecommand{\doi}[1]{doi: #1}\else
  \providecommand{\doi}{doi: \begingroup \urlstyle{rm}\Url}\fi

\bibitem[Aalen \& Johansen(1978)Aalen and Johansen]{aalen1978empirical}
Aalen, O.~O. and Johansen, S.
\newblock An empirical transition matrix for non-homogeneous markov chains
  based on censored observations.
\newblock \emph{Scandinavian Journal of Statistics}, pp.\  141--150, 1978.

\bibitem[Alaa \& van~der Schaar(2017)Alaa and van~der Schaar]{alaa2017deep}
Alaa, A.~M. and van~der Schaar, M.
\newblock Deep multi-task gaussian processes for survival analysis with
  competing risks.
\newblock In \emph{Proceedings of the 31st International Conference on Neural
  Information Processing Systems}, pp.\  2326--2334. Curran Associates Inc.,
  2017.

\bibitem[Antolini et~al.(2005)Antolini, Boracchi, and
  Biganzoli]{antolini2005time}
Antolini, L., Boracchi, P., and Biganzoli, E.
\newblock A time-dependent discrimination index for survival data.
\newblock \emph{Statistics in medicine}, 24\penalty0 (24):\penalty0 3927--3944,
  2005.

\bibitem[Bergstra et~al.(2013)Bergstra, Yamins, and Cox]{bergstra2013making}
Bergstra, J., Yamins, D., and Cox, D.
\newblock Making a science of model search: Hyperparameter optimization in
  hundreds of dimensions for vision architectures.
\newblock In \emph{International conference on machine learning}, pp.\
  115--123. PMLR, 2013.

\bibitem[Bezanson et~al.(2017)Bezanson, Edelman, Karpinski, and
  Shah]{bezanson2017julia}
Bezanson, J., Edelman, A., Karpinski, S., and Shah, V.~B.
\newblock Julia: A fresh approach to numerical computing.
\newblock \emph{SIAM review}, 59\penalty0 (1):\penalty0 65--98, 2017.

\bibitem[Brier \& Allen(1951)Brier and Allen]{brier1951verification}
Brier, G.~W. and Allen, R.~A.
\newblock Verification of weather forecasts.
\newblock In \emph{Compendium of meteorology}, pp.\  841--848. Springer, 1951.

\bibitem[Chapfuwa et~al.(2018)Chapfuwa, Tao, Li, Page, Goldstein, Carin, and
  Henao]{chapfuwa2018adversarial}
Chapfuwa, P., Tao, C., Li, C., Page, C., Goldstein, B., Carin, L., and Henao,
  R.
\newblock Adversarial time-to-event modeling.
\newblock \emph{arXiv preprint arXiv:1804.03184}, 2018.

\bibitem[Chapfuwa et~al.(2020)Chapfuwa, Li, Mehta, Carin, and
  Henao]{Chapfuwa_2020}
Chapfuwa, P., Li, C., Mehta, N., Carin, L., and Henao, R.
\newblock Survival cluster analysis.
\newblock \emph{Proceedings of the ACM Conference on Health, Inference, and
  Learning}, 4 2020.
\newblock \doi{10.1145/3368555.3384465}.

\bibitem[Chen et~al.(2018)Chen, Rubanova, Bettencourt, and
  Duvenaud]{chen2018neural}
Chen, T.~Q., Rubanova, Y., Bettencourt, J., and Duvenaud, D.~K.
\newblock Neural ordinary differential equations.
\newblock In \emph{Advances in neural information processing systems}, pp.\
  6571--6583, 2018.

\bibitem[Collett(2003)]{collett2003modelling}
Collett, D.
\newblock \emph{Modelling Survival Data in Medical Research, Second Edition}.
\newblock Chapman \& Hall/CRC Texts in Statistical Science. Taylor \& Francis,
  2003.
\newblock ISBN 9781584883258.

\bibitem[Cox(1972)]{cox1972regression}
Cox, D.~R.
\newblock Regression models and life-tables.
\newblock \emph{Journal of the Royal Statistical Society: Series B
  (Methodological)}, 34\penalty0 (2):\penalty0 187--202, 1972.

\bibitem[Curtis et~al.(2012)Curtis, Shah, Chin, Turashvili, Rueda, Dunning,
  Speed, Lynch, Samarajiwa, Yuan, et~al.]{curtis2012genomic}
Curtis, C., Shah, S.~P., Chin, S.-F., Turashvili, G., Rueda, O.~M., Dunning,
  M.~J., Speed, D., Lynch, A.~G., Samarajiwa, S., Yuan, Y., et~al.
\newblock The genomic and transcriptomic architecture of 2,000 breast tumours
  reveals novel subgroups.
\newblock \emph{Nature}, 486\penalty0 (7403):\penalty0 346--352, 2012.

\bibitem[de~Wreede et~al.(2011)de~Wreede, Fiocco, and Putter]{mstate}
de~Wreede, L., Fiocco, M., and Putter, H.
\newblock mstate: An r package for the analysis of competing risks and
  multi-state models.
\newblock \emph{Journal of Statistical Software, Articles}, 38\penalty0
  (7):\penalty0 1--30, 2011.
\newblock ISSN 1548-7660.
\newblock \doi{10.18637/jss.v038.i07}.

\bibitem[Dirick et~al.(2017)Dirick, Claeskens, and Baesens]{dirick2017time}
Dirick, L., Claeskens, G., and Baesens, B.
\newblock Time to default in credit scoring using survival analysis: a
  benchmark study.
\newblock \emph{Journal of the Operational Research Society}, 68\penalty0
  (6):\penalty0 652--665, 2017.

\bibitem[Duffy et~al.(1997)Duffy, Day, Tabár, Chen, and Smith]{duffy1997}
Duffy, S.~W., Day, N.~E., Tabár, L., Chen, H.-H., and Smith, T.~C.
\newblock {Markov Models of Breast Tumor Progression: Some Age-Specific Results
  }.
\newblock \emph{JNCI Monographs}, 1997\penalty0 (22):\penalty0 93--97, 01 1997.
\newblock ISSN 1052-6773.
\newblock \doi{10.1093/jncimono/1997.22.93}.

\bibitem[Dupont et~al.(2019)Dupont, Doucet, and Teh]{dupont2019augmented}
Dupont, E., Doucet, A., and Teh, Y.~W.
\newblock Augmented neural odes, 2019.

\bibitem[Feller(1949)]{feller1949}
Feller, W.
\newblock On the theory of stochastic processes, with particular reference to
  applications.
\newblock In \emph{Proceedings of the [First] Berkeley Symposium on
  Mathematical Statistics and Probability}, pp.\  403--432, Berkeley, Calif.,
  1949. University of California Press.

\bibitem[Fern{\'a}ndez et~al.(2016)Fern{\'a}ndez, Rivera, and
  Teh]{fernandez2016gaussian}
Fern{\'a}ndez, T., Rivera, N., and Teh, Y.~W.
\newblock Gaussian processes for survival analysis.
\newblock In \emph{Advances in Neural Information Processing Systems}, pp.\
  5021--5029, 2016.

\bibitem[Fine \& Gray(1999)Fine and Gray]{fine1999proportional}
Fine, J.~P. and Gray, R.~J.
\newblock A proportional hazards model for the subdistribution of a competing
  risk.
\newblock \emph{Journal of the American statistical association}, 94\penalty0
  (446):\penalty0 496--509, 1999.

\bibitem[Fiocco et~al.(2008)Fiocco, Putter, and van
  Houwelingen]{fiocco2008reduced}
Fiocco, M., Putter, H., and van Houwelingen, H.~C.
\newblock Reduced-rank proportional hazards regression and simulation-based
  prediction for multi-state models.
\newblock \emph{Statistics in Medicine}, 27\penalty0 (21):\penalty0 4340--4358,
  2008.

\bibitem[Gerstung et~al.(2017)Gerstung, Papaemmanuil, Martincorena, Bullinger,
  Gaidzik, Paschka, Heuser, Thol, Bolli, Ganly, et~al.]{gerstung2017precision}
Gerstung, M., Papaemmanuil, E., Martincorena, I., Bullinger, L., Gaidzik,
  V.~I., Paschka, P., Heuser, M., Thol, F., Bolli, N., Ganly, P., et~al.
\newblock Precision oncology for acute myeloid leukemia using a knowledge bank
  approach.
\newblock \emph{Nature genetics}, 49\penalty0 (3):\penalty0 332, 2017.

\bibitem[Giunchiglia et~al.(2018)Giunchiglia, Nemchenko, and van~der
  Schaar]{giunchiglia2018rnn}
Giunchiglia, E., Nemchenko, A., and van~der Schaar, M.
\newblock Rnn-surv: A deep recurrent model for survival analysis.
\newblock In \emph{International Conference on Artificial Neural Networks},
  pp.\  23--32. Springer, 2018.

\bibitem[Graf et~al.(1999)Graf, Schmoor, Sauerbrei, and
  Schumacher]{graf1999ibsvsconc}
Graf, E., Schmoor, C., Sauerbrei, W., and Schumacher, M.
\newblock Assessment and comparison of prognostic classification schemes for
  survival data.
\newblock \emph{Statistics in Medicine}, 18\penalty0 (17‐18):\penalty0
  2529--2545, 1999.

\bibitem[Grinfeld et~al.(2018)Grinfeld, Nangalia, Baxter, Wedge, Angelopoulos,
  Cantrill, Godfrey, Papaemmanuil, Gundem, MacLean,
  et~al.]{grinfeld2018classification}
Grinfeld, J., Nangalia, J., Baxter, E.~J., Wedge, D.~C., Angelopoulos, N.,
  Cantrill, R., Godfrey, A.~L., Papaemmanuil, E., Gundem, G., MacLean, C.,
  et~al.
\newblock Classification and personalized prognosis in myeloproliferative
  neoplasms.
\newblock \emph{New England Journal of Medicine}, 379\penalty0 (15):\penalty0
  1416--1430, 2018.

\bibitem[Hand(1997)]{hand1997construction}
Hand, D.
\newblock \emph{Construction and Assessment of Classification Rules}.
\newblock Wiley Series in Probability and Statistics. Wiley, 1997.
\newblock ISBN 9780471965831.

\bibitem[Harden \& Kropko(2019)Harden and Kropko]{harden_kropko_2019}
Harden, J.~J. and Kropko, J.
\newblock Simulating duration data for the cox model.
\newblock \emph{Political Science Research and Methods}, 7\penalty0
  (4):\penalty0 921–928, 2019.
\newblock \doi{10.1017/psrm.2018.19}.

\bibitem[Ishwaran et~al.(2008)Ishwaran, Kogalur, Blackstone, Lauer,
  et~al.]{ishwaran2008random}
Ishwaran, H., Kogalur, U.~B., Blackstone, E.~H., Lauer, M.~S., et~al.
\newblock Random survival forests.
\newblock \emph{The annals of applied statistics}, 2\penalty0 (3):\penalty0
  841--860, 2008.

\bibitem[J et~al.(1978)J, JD, and B]{hilden1978probs}
J, H., JD, H., and B, B.
\newblock The measurement of performance in probabilistic diagnosis. iii.
  methods based on continuous functions of the diagnostic probabilities.
\newblock \emph{Methods Inf Med}, 17\penalty0 (4):\penalty0 238‐--246, 1978.

\bibitem[Jia \& Benson(2019)Jia and Benson]{jia2019neural}
Jia, J. and Benson, A.~R.
\newblock Neural jump stochastic differential equations, 2019.

\bibitem[Kaplan \& Meier(1958)Kaplan and Meier]{kaplan1958nonparametric}
Kaplan, E.~L. and Meier, P.
\newblock Nonparametric estimation from incomplete observations.
\newblock \emph{Journal of the American statistical association}, 53\penalty0
  (282):\penalty0 457--481, 1958.

\bibitem[Katzman et~al.(2018)Katzman, Shaham, Cloninger, Bates, Jiang, and
  Kluger]{katzman2018deepsurv}
Katzman, J.~L., Shaham, U., Cloninger, A., Bates, J., Jiang, T., and Kluger, Y.
\newblock Deepsurv: personalized treatment recommender system using a cox
  proportional hazards deep neural network.
\newblock \emph{BMC medical research methodology}, 18\penalty0 (1):\penalty0
  24, 2018.

\bibitem[Kingma \& Ba(2014)Kingma and Ba]{kingma2014adam}
Kingma, D.~P. and Ba, J.
\newblock Adam: A method for stochastic optimization, 2014.

\bibitem[Knaus et~al.(1995)Knaus, Harrell, Lynn, Goldman, Phillips, Connors,
  Dawson, Fulkerson, Califf, Desbiens, Layde, Oye, Bellamy, Hakim, and
  Wagner]{knaus1995support}
Knaus, W.~A., Harrell, F.~E., Lynn, J., Goldman, L., Phillips, R.~S., Connors,
  A.~F., Dawson, N.~V., Fulkerson, W.~J., Califf, R.~M., Desbiens, N., Layde,
  P., Oye, R.~K., Bellamy, P.~E., Hakim, R.~B., and Wagner, D.~P.
\newblock The support prognostic model: Objective estimates of survival for
  seriously ill hospitalized adults.
\newblock \emph{Annals of Internal Medicine}, 122\penalty0 (3):\penalty0
  191--203, 1995.

\bibitem[Kolmogoroff(1931)]{kolmogoroff1931analytischen}
Kolmogoroff, A.
\newblock {\"U}ber die analytischen methoden in der
  wahrscheinlichkeitsrechnung.
\newblock \emph{Mathematische Annalen}, 104\penalty0 (1):\penalty0 415--458,
  1931.

\bibitem[Kvamme et~al.(2019)Kvamme, Borgan, and Scheel]{kvamme2019time}
Kvamme, H., Borgan, {\O}., and Scheel, I.
\newblock Time-to-event prediction with neural networks and cox regression.
\newblock \emph{Journal of Machine Learning Research}, 20\penalty0
  (129):\penalty0 1--30, 2019.

\bibitem[Lee et~al.(2018)Lee, Zame, Yoon, and van~der Schaar]{lee2018deephit}
Lee, C., Zame, W.~R., Yoon, J., and van~der Schaar, M.
\newblock Deephit: A deep learning approach to survival analysis with competing
  risks.
\newblock In \emph{Thirty-Second AAAI Conference on Artificial Intelligence},
  2018.

\bibitem[Longini~Jr. et~al.(1989)Longini~Jr., Clark, Byers, Ward, Darrow, Lemp,
  and Hethcote]{Longini1989}
Longini~Jr., I.~M., Clark, W.~S., Byers, R.~H., Ward, J.~W., Darrow, W.~W.,
  Lemp, G.~F., and Hethcote, H.~W.
\newblock Statistical analysis of the stages of hiv infection using a markov
  model.
\newblock \emph{Statistics in Medicine}, 8\penalty0 (7):\penalty0 831--843,
  1989.
\newblock \doi{10.1002/sim.4780080708}.

\bibitem[Massaroli et~al.(2020)Massaroli, Poli, Park, Yamashita, and
  Asama]{massaroli2020dissecting}
Massaroli, S., Poli, M., Park, J., Yamashita, A., and Asama, H.
\newblock Dissecting neural odes, 2020.

\bibitem[McInnes et~al.(2018)McInnes, Healy, Saul, and
  Grossberger]{mcinnes2018umap-software}
McInnes, L., Healy, J., Saul, N., and Grossberger, L.
\newblock Umap: Uniform manifold approximation and projection.
\newblock \emph{The Journal of Open Source Software}, 3\penalty0 (29):\penalty0
  861, 2018.

\bibitem[Nicora et~al.(2020)Nicora, Moretti, Sauta, Della~Porta, Malcovati,
  Cazzola, Quaglini, and Bellazzi]{nicora2020continuous}
Nicora, G., Moretti, F., Sauta, E., Della~Porta, M., Malcovati, L., Cazzola,
  M., Quaglini, S., and Bellazzi, R.
\newblock A continuous-time markov model approach for modeling myelodysplastic
  syndromes progression from cross-sectional data.
\newblock \emph{Journal of Biomedical Informatics}, pp.\  103398, 2020.

\bibitem[Paszke et~al.(2019)Paszke, Gross, Massa, Lerer, Bradbury, Chanan,
  Killeen, Lin, Gimelshein, Antiga, Desmaison, Kopf, Yang, DeVito, Raison,
  Tejani, Chilamkurthy, Steiner, Fang, Bai, and Chintala]{pytorch}
Paszke, A., Gross, S., Massa, F., Lerer, A., Bradbury, J., Chanan, G., Killeen,
  T., Lin, Z., Gimelshein, N., Antiga, L., Desmaison, A., Kopf, A., Yang, E.,
  DeVito, Z., Raison, M., Tejani, A., Chilamkurthy, S., Steiner, B., Fang, L.,
  Bai, J., and Chintala, S.
\newblock Pytorch: An imperative style, high-performance deep learning library.
\newblock In Wallach, H., Larochelle, H., Beygelzimer, A., d\' Alch\'{e}-Buc,
  F., Fox, E., and Garnett, R. (eds.), \emph{Advances in Neural Information
  Processing Systems 32}, pp.\  8024--8035. Curran Associates, Inc., 2019.

\bibitem[Pereira et~al.(2016)Pereira, Chin, Rueda, Vollan, Provenzano,
  Bardwell, Pugh, Jones, Russell, Sammut, et~al.]{pereira2016somatic}
Pereira, B., Chin, S.-F., Rueda, O.~M., Vollan, H.-K.~M., Provenzano, E.,
  Bardwell, H.~A., Pugh, M., Jones, L., Russell, R., Sammut, S.-J., et~al.
\newblock The somatic mutation profiles of 2,433 breast cancers refine their
  genomic and transcriptomic landscapes.
\newblock \emph{Nature communications}, 7\penalty0 (1):\penalty0 1--16, 2016.

\bibitem[Rackauckas \& Nie(2017)Rackauckas and
  Nie]{rackauckas2017differentialequations}
Rackauckas, C. and Nie, Q.
\newblock Differentialequations.jl--a performant and feature-rich ecosystem for
  solving differential equations in julia.
\newblock \emph{Journal of Open Research Software}, 5\penalty0 (1), 2017.

\bibitem[Ren et~al.(2019)Ren, Qin, Zheng, Yang, Zhang, Qiu, and
  Yu]{ren2019deep}
Ren, K., Qin, J., Zheng, L., Yang, Z., Zhang, W., Qiu, L., and Yu, Y.
\newblock Deep recurrent survival analysis.
\newblock In \emph{Proceedings of the AAAI Conference on Artificial
  Intelligence}, volume~33, pp.\  4798--4805, 2019.

\bibitem[Rueda et~al.(2019)Rueda, Sammut, Seoane, Chin, Caswell-Jin, Callari,
  Batra, Pereira, Bruna, Ali, et~al.]{rueda2019dynamics}
Rueda, O.~M., Sammut, S.-J., Seoane, J.~A., Chin, S.-F., Caswell-Jin, J.~L.,
  Callari, M., Batra, R., Pereira, B., Bruna, A., Ali, H.~R., et~al.
\newblock Dynamics of breast-cancer relapse reveal late-recurring er-positive
  genomic subgroups.
\newblock \emph{Nature}, 567\penalty0 (7748):\penalty0 399--404, 2019.

\bibitem[Samaniego(2007)]{samaniego2007system}
Samaniego, F.~J.
\newblock \emph{System signatures and their applications in engineering
  reliability}, volume 110.
\newblock Springer Science \& Business Media, 2007.

\bibitem[Spitoni et~al.(2018)Spitoni, Lammens, and
  Putter]{spitoni2018prediction}
Spitoni, C., Lammens, V., and Putter, H.
\newblock Prediction errors for state occupation and transition probabilities
  in multi-state models.
\newblock \emph{Biometrical Journal}, 60\penalty0 (1):\penalty0 34--48, 2018.

\bibitem[Srivastava et~al.(2014)Srivastava, Hinton, Krizhevsky, Sutskever, and
  Salakhutdinov]{srivastava2014dropout}
Srivastava, N., Hinton, G., Krizhevsky, A., Sutskever, I., and Salakhutdinov,
  R.
\newblock Dropout: a simple way to prevent neural networks from overfitting.
\newblock \emph{The journal of machine learning research}, 15\penalty0
  (1):\penalty0 1929--1958, 2014.

\bibitem[Therneau(2020)]{survival-package}
Therneau, T.~M.
\newblock \emph{A Package for Survival Analysis in R}, 2020.
\newblock URL \url{https://CRAN.R-project.org/package=survival}.
\newblock R package version 3.1-12.

\bibitem[Titman(2011)]{titman2011kfespline}
Titman, A.~C.
\newblock Flexible nonhomogeneous markov models for panel observed data.
\newblock \emph{Biometrics}, 67\penalty0 (3):\penalty0 780--787, 2011.
\newblock \doi{10.1111/j.1541-0420.2010.01550.x}.

\bibitem[Vigan{\`o} et~al.(2000)Vigan{\`o}, Dorgan, Buckingham, Bruera, and
  Suarez-Almazor]{vigano2000survival}
Vigan{\`o}, A., Dorgan, M., Buckingham, J., Bruera, E., and Suarez-Almazor,
  M.~E.
\newblock Survival prediction in terminal cancer patients: a systematic review
  of the medical literature.
\newblock \emph{Palliative Medicine}, 14\penalty0 (5):\penalty0 363--374, 2000.

\end{thebibliography}
\bibliographystyle{icml2020}
\clearpage

\onecolumn
\appendix
\section*{Supplementary Material: \textit{Neural ODEs for Multi-State Survival Analysis}}
\section{Proof of the inverse relationship of the Kolmogorov backward equation}
In Algorithm \ref{algo2} we want to cheaply compute $\bm{P}(s, t)$ for any $s, t$ using $\bm{P}^{-1}(0,s)\bm{P}(0,t) = \bm{P}(s,t)$. A na\"ive implementation would require inversion of $\bm{P}(0, s)$ for any $s$ where we evaluate this matrix which can become costly. 
Instead, we leverage the connection of the Kolmogorov forward equation with the respective Kolmogorov \emph{backward} equation 
\[\frac{d\bm{P}(s,0)}{ds} = - \bm{Q}(s) \bm{P}(s, 0).\]
Namely, we use that the matrix $\bm{P}(s, 0)$ obtained from the backward equation equals $\bm{P}(s, 0) = \bm{P}^{-1}(0, s)$ from the forward equations as shown in the following proposition. The proposition is well-known and presented only for reference. This relationship justifies the use of the same letter $\bm{P}$ in both sets of differential equations as well as the description as \enquote{forward} and \enquote{backward} equations. 
\begin{prop}
Let $s>0$ and let $\bm{P}(0, s)$ solve the forward equation. Denote $\bm{P}(s, 0)$ the solution to the backward equations. Then $\bm{P}(s,0) = \bm{P}^{-1}(0,s)$.
\end{prop}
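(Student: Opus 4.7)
The plan is to exploit the product rule on $A(s) \coloneqq \bm{P}(0,s)\,\bm{P}(s,0)$, where the first factor is the solution of the forward equation and the second factor is the solution of the backward equation, both starting from $\bm{P}(0,0)=\mathbb{1}$. The goal is to show $A(s)\equiv \mathbb{1}$, which immediately gives that $\bm{P}(s,0)$ is a right inverse of $\bm{P}(0,s)$, and since these are square matrices of the same finite dimension $S$, a right inverse is automatically a two-sided inverse.

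Concretely, I would first recall the two matrix ODEs in the form
\begin{equation*}
\frac{d\bm{P}(0,s)}{ds} = \bm{P}(0,s)\,\bm{Q}(s), \qquad \frac{d\bm{P}(s,0)}{ds} = -\bm{Q}(s)\,\bm{P}(s,0),
\end{equation*}
both with initial condition $\bm{P}(0,0)=\mathbb{1}$ (the former being the matrix form of the Kolmogorov forward equation written in Section~3, the latter stated in the appendix). Next, I would differentiate $A(s)$ using the product rule, substitute the two equations above, and observe that the two resulting terms are $\bm{P}(0,s)\bm{Q}(s)\bm{P}(s,0)$ and $-\bm{P}(0,s)\bm{Q}(s)\bm{P}(s,0)$, which cancel. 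Hence $A'(s)=\mathbf{0}$ on $[0,\infty)$.

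Combined with $A(0)=\mathbb{1}\cdot\mathbb{1}=\mathbb{1}$, this yields $A(s)=\mathbb{1}$ for every $s\geq 0$, i.e.\ $\bm{P}(0,s)\,\bm{P}(s,0)=\mathbb{1}$. Finally, I would invoke the fact that for square matrices over $\mathbb{R}$ a one-sided inverse is a two-sided inverse to conclude $\bm{P}(s,0)=\bm{P}^{-1}(0,s)$, as required.

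The calculation itself is entirely routine; the only subtlety worth flagging is that both $\bm{P}(0,s)$ and $\bm{P}(s,0)$ must be interpreted as formal solutions of their respective matrix ODEs (rather than as probabilistic transition kernels, which would only make sense forward in time), and that the cancellation in the product rule relies on placing $\bm{Q}(s)$ on the correct side in each equation. Given that the paper has already written both equations in the appropriate order, the argument goes through cleanly and there is no real obstacle to overcome.
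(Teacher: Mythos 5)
Your proposal is correct, but it runs the argument in the opposite logical direction from the paper, and the difference is worth noting. The paper starts from the identity $\bm{P}(0,s)\bm{P}^{-1}(0,s) = \mathbb{1}$, differentiates it, substitutes the forward equation, and solves for $\tfrac{d}{ds}\bm{P}^{-1}(0,s)$, thereby showing that the \emph{inverse} satisfies the backward equation; to then conclude that the backward solution $\bm{P}(s,0)$ equals this inverse, one implicitly needs (i) that $\bm{P}(0,s)$ is invertible with differentiable inverse, and (ii) uniqueness of solutions to the backward ODE with identity initial condition. You instead differentiate the product $A(s)=\bm{P}(0,s)\bm{P}(s,0)$ of the two given ODE solutions, observe the cancellation $\bm{P}(0,s)\bm{Q}(s)\bm{P}(s,0)-\bm{P}(0,s)\bm{Q}(s)\bm{P}(s,0)=\bm{0}$, and conclude $A\equiv A(0)=\mathbb{1}$, finishing with the finite-dimensional fact that a one-sided inverse of a square matrix is two-sided. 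Your version is more self-contained: invertibility of $\bm{P}(0,s)$ comes out as a conclusion rather than an assumption, no appeal to an ODE uniqueness theorem is required, and uniqueness of the backward solution is obtained as a byproduct (any such solution must equal $\bm{P}^{-1}(0,s)$). What the paper's version buys in exchange is expository: by deriving the equation satisfied by $\bm{P}^{-1}(0,s)$ it \emph{identifies} that equation as the Kolmogorov backward equation, which motivates the shared notation and the forward/backward terminology. Both computations hinge on the same structural point you flag, namely that $\bm{Q}(s)$ multiplies on the right in the forward equation and on the left in the backward equation.
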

\begin{proof}
Denote $\bm{1}$ the identity matrix of appropriate dimension. We can write
\begin{equation}\label{eq:identityrule}
    0 = \frac{d}{ds} \bm{1} = \frac{d}{ds} \left(\bm{P}(0,s)\bm{P}^{-1}(0,s)\right).
\end{equation}
On the other hand, using the product rule as well as the forward equations, we get
\begin{align*}
     &\frac{d}{ds} \left(\bm{P}(0,s)\bm{P}^{-1}(0,s)\right) \\
     &=  \left(\frac{d}{ds}\bm{P}(0,s)\right)\bm{P}^{-1}(0,s) +   \bm{P}(0,s)\frac{d}{ds}\bm{P}^{-1}(0,s) \\
     & =\bm{P}(0,s)\bm{Q}(s)\bm{P}^{-1}(0,s) + \bm{P}(0,s)\frac{d}{ds}\bm{P}^{-1}(0,s).
\end{align*}
Since the overall system is identically $\bm{0}$ because of \eqref{eq:identityrule}, we can rearrange 
\[\bm{P}(0,s)\frac{d\bm{P}^{-1}(0,s)}{ds} = - \bm{P}(0,s) \bm{Q}(s) \bm{P}^{-1}(0,s).\]
Multiplying with the inverse from the left, we are left with
\[\frac{d\bm{P}^{-1}(0,s)}{ds} = - \bm{Q}(s) \bm{P}^{-1}(0,s)\]
which we identify as the Kolmogorov backward equation.
\end{proof}

%%%%%%%%%%%%%%%%%%%%%%%%%%%%%%%%%%%%%%%%%%%%%%%%%%%%%%%%%%%%%%%%%%%%%%%%%
\section{Lyapunov style loss term}
In some training cases we observed diverging latent state trajectories, making the training procedure unstable and giving us underflow errors in the differential equation solvers. Analogous to \citet{massaroli2020dissecting}, we introduce a loss term for the latent states, which is related to the estimation of Lyapunov exponents and therefore has the interpretation of keeping the differential equation non-chaotic, therefore regularizing the evolution. This is obtained by taking an $L_2$ loss of the latent states at the maximum time in the mini batch $t_{mmb}$,
\[ L_{\text{Ly}} = \frac{1}{M} \sum_{i} \|m_i(t_{mmb})\|_2^2, \]
where $M$ is the number of latent states in the model. We find this additional loss to improve stability of our algorithm for large number of latent states.

The full loss is then given by
\[
L = -\log\mathcal{L}(\theta; \mathcal{Y}) + \mu L_{\text{Ly}}
\]
with the likelihood $\mathcal{L}(\theta; \mathcal{Y})$ specified in \autoref{sec:multi_state}.
For example, in the case of right-censoring which we consider in this paper, $\mathcal{L}(\theta; \mathcal{Y})$ is given in \autoref{sec:rightcensoring}.

\section{Algorithm}
The \snode{} algorithm is given in Algorithm \ref{alg:snode}.

\begin{algorithm*}[h]
    \caption{Obtain $P_{ij}(s,t)$ and $\lambda_{ij}(t)$ in \snode}
    \label{algo2}
\begin{algorithmic}%[1]
\STATE {\bfseries Input:} Covariates $\bm{x}$, time interval $(s,t)$.
\STATE $\bm{m}(0) = f_{\theta}(x)$, $\bm{P}(0,0)=\mathbb{1}$ $\rightarrow$ $s_0=(\bm{P}(0,0),\bm{P}(0,0),\bm{m}(0))$ 
\COMMENT{Get initial values.}
\STATE
\STATE \textbf{Function }{\texttt{KFE\_KBE} }{($\bm{P}(0,t),\bm{P}(t,0),\bm{m}(t),t$)}: 
\COMMENT{Kolmogorov forward and backward equation.}
\STATE $\qquad\lambda_{ij}(t),\ M_i(t) = g_{\phi}(\bm{P}(0,t),\bm{m}(t),\bm{x},t)$ \COMMENT{$\lambda$, $M$ from NN with \texttt{softplus} for $\lambda$.}
\STATE $\qquad\lambda_{ii} = -\sum_{k}\lambda_{ik}$ 
\COMMENT{Enforce constraints.}
\STATE $\qquad\frac{\mathrm{d}P_{ij}(0,t)}{\mathrm{d}t} = \sum_k P_{ik}(0,t) \lambda_{kj}$ 
\COMMENT{Calculate gradient for Kolmogorov forward equation.}
\STATE $\qquad\frac{\mathrm{d}P_{ij}(t,0)}{\mathrm{d}t} = -\sum_k \lambda_{ik} P_{kj}(t,0)$ \COMMENT{Calculate gradient for Kolmogorov backward equation.}
\STATE $\qquad\frac{\mathrm{d}m_i(t)}{\mathrm{d}t} = M_i(t)$ 
\COMMENT{Calculate gradient for augmented evolution.}
\STATE \textbf{return} $\left[\frac{\mathrm{d}P_{ij}(0,t)}{\mathrm{d}t},\frac{\mathrm{d}P_{ij}(t,0)}{\mathrm{d}t},\frac{\mathrm{d}m_i(t)}{\mathrm{d}t}\right]$ 
\COMMENT{return derivatives}
\STATE
\STATE $\bm{P}(0,t),\bm{P}(s,0), \bm{m}(t), \dots = \texttt{ODEsolve}(s_0,\texttt{KFE\_KBE},(0,t),\texttt{save\_at}=\{s,t\})$
\STATE $\lambda_{ij}(t) = g_{\phi}(\bm{P}(0,t),\bm{m}(t),t)$ 
\COMMENT{Get the instantaneous transition rate.}
\STATE $\bm{P}(s,t) = \bm{P}(s,0)\cdot \bm{P}(0,t)$ 
\COMMENT{Use the composability to get $\bm{P}(s,t)$}
\STATE $P_{ij}(s,t)$, $\lambda_{ij}(t)$
\end{algorithmic}
\label{alg:snode}
\end{algorithm*}

\section{Implementation details}
All models are implemented in PyTorch~\citep{pytorch} using the \textbf{\texttt{torchdiffeq}} package~\citep{chen2018neural}. As our example networks are sufficiently small, we use backpropagation through the ODE solver to obtain gradients, however, using the adjoint method is of course possible as well. 
We use the \texttt{dopri5} method for the ODE solver with an absolute and relative tolerance of $10^{-8}$ in the ODE solver. To include the accuracy of the solution as a hyperparameter, we scale the event times to have the maximum value $S$, which we choose to be of $\mathcal{O}(1)$. To specify the non-zero elements of the transition rate matrix, a matrix with 1 indicators for non-zero off-diagonal elements and \texttt{NaN} indicators for all other elements are needed.

For training the model minimizing the negative log-likelihood, the hyperparameters are:
\begin{itemize}[noitemsep]
    \item Number of layers $L_e$ and number of neurons per layer $N_e$ with dropout\citep{srivastava2014dropout} $p_e$ for multilayer perceptron encoding the covariates into memory states;
    \item Number of layers $L_Q$ and number of neurons per layer $N_Q$ for multilayer perceptron modeling $\bm{Q}$;
    \item Number of memory states $M$;
    \item Coefficient of Lyapunov style loss term $\mu$;
    \item Scaling coefficient for event times $S$;
    \item Learning rate $l$ of the Adam optimizer~\citep{kingma2014adam};
    \item Weight decay $w$.
\end{itemize}
For the variational approach minimizing the ELBO, we have the hyperparameters:
\begin{itemize}[noitemsep]
    \item Number of layers $L_p$ and number of neurons per layer $N_p$ with dropout $p_p$ for multilayer perceptron for prior $p(z|x)$;
    \item Number of layers $L_q$ and number of neurons per layer $N_q$ with dropout $p_p$ for multilayer perceptron for variational postierior $q(z|x,t)$;
    \item Number of layers $L_Q$ and number of neurons per layer $N_Q$ for multilayer perceptron modeling $\bm{Q}$;
    \item Number of latent states $M$;
    \item Coefficient of Lyapunov style loss term $\mu$;
    \item ELBO parameter $\beta$
    \item Scaling coefficient for event times $S$;
    \item Learning rate $l$ of the Adam optimizer;
    \item Weight decay $w$,
\end{itemize}
where the ELBO parameter $\beta$ characterizes the relative weight between log-likelihood and Kullback-Leibler divergence, which we set to be $1$ throughout the paper. Closer investigation of the clustering property with respect to this parameter would be of interest.

\section{Experiments}
\subsection{Benchmark in competing risks case}
To show the efficacy of our model in the multi-state setting, the simplest extension of survival models is given by the competing risks model (\autoref{fig:comp_hazard}).
In this setting all possible states the model can transition to are absorbing, and hence there are no intermediate states. 
In this specific multi-state case we can benchmark our model against the standard tools for competing risks analysis: 
The cause-specific Cox models, where a Cox proportional hazards model is fit for each transition taking all other transitions as censored; the Fine--Gray model \citep{fine1999proportional}; as well as DeepHit \citep{lee2018deephit} and DeepHit with an additional loss term to specifically improve concordance, at the cost of worse calibration \citep{kvamme2019time}. 

\paragraph*{Benchmark I: (\textsc{synthetic})}
\citet{lee2018deephit} provide the \textsc{synthetic} data set with two possible outcomes from a simulation. We noticed, however, that around $12.5\%$ of observations have events occurring at $t=0$. This does not make sense from a survival standpoint, as one would include patients into the study who have already experienced the event. The inclusion of a large number of events at $t=0$ favors \texttt{DeepHit}, as it is a probability mass function model, meaning it models the probability mass at every observed time point separately, as opposed to assuming a relatively smooth hazard rate. 
Leaving in the events at $t=0$, \snode{} obtains cause specific concordances of ${0.736\pm 0.006}$ and ${0.739\pm 0.007}$ using five fold cross validation, which is still competitive with DeepHit without a ranking loss, which scores $0.739\pm0.002$ and $0.737\pm0.003$ respectively \citep{lee2018deephit}, whereas the cause specific Cox model and the Fine Gray model have cause specific concordances below $0.6$, as found in \citep{lee2018deephit}. Simply removing the patients with events at $0$ and with manual hyper-parameter tuning for \snode{} and \texttt{hyperopt} \cite{bergstra2013making} optimization for DeepHit on the validation set, we find that \snode{} outperforms DeepHit, scoring $c_1=0.74$ and $c_2=0.72$ for the two cause specific concordances, compared to $c_1=0.73$ and $c_2=0.72$ for DeepHit, as implemented in the pycox package \cite{kvamme2019time}.

\paragraph*{Benchmark II: New Dataset}
To obtain a fair comparison and to avoid pathological events at $t=0$, as well as benchmark calibration using the (integrated) Brier score, we simulate $5000$ patients directly from a Markov jump process with two competing absorbing outcomes, using the Gillespie algorithm with some slight proportional hazards violation. We split into $64\%$ train set, $16\%$ validation set and $20\%$ test set and train both \snode{} and DeepHit, implemented in the pycox package \cite{kvamme2019time} with early stopping, as well as manual and systematic hyper-parameter search for \snode{} and DeepHit respectively on train and validation set. For the cause-specific Cox model and Fine-Gray model, we do not have hyper-parameters and therefore train on the combined train and validation set. The trained models are evaluated on the test set.

We calculate cause specific concordances (c), as well as integrated Brier score (ibs) through the integrated squared distance to the ground truth simulated probability distribution for each cumulative incidence functions. The results are shown in \autoref{tab_2}.

\begin{table*}
  \caption{Benchmark in the competing hazards case. Higher concordance and lower integrated Brier score are better.}
  \label{tab_2}
  \centering
    \begin{tabular}{c|c|c|c|c}
    Model & c cause 1 & c cause 2 & ibs cause 1 & ibs cause 2\\ \hline
    Cause-specific Cox model \cite{cox1972regression}& $0.67$ & $0.72$ & $0.20$ & $0.32$ \\
    Fine-Gray \cite{fine1999proportional} & $0.69$ & $0.71$ & $0.13$ & $0.26$\\
    DeepHit \cite{lee2018deephit} & $0.68$ & $0.71$ & $0.47$ & $1.5$ \\
    \rowcolor{Gray}
    \snode~(this paper) & $\bm{0.70}$ & $\bm{0.73}$ & $\bm{0.09}$ & $\bm{0.13}$
    \end{tabular}
\end{table*}

We note that we use an ad-hoc hyperparameter setting and only performed manual hyperparameter tuning using the validation set for \snode{}, whereas we use \texttt{hyperopt} \cite{bergstra2013making} with $300$ cycles of training to find the optimal hyperparameter setting on the validation set for DeepHit. We see that \snode{} outperforms all other models in terms of concordance and especially integrated Brier score. To further demonstrate the good calibration of \snode{}, we show the cumulative incidence functions for all models, as well as the Brier score over time in Figure~\ref{fig:comprisk}. In both figures we see very good calibration of \snode{} at all times, outperforming all other models. 

\begin{figure}
    \centering
    \includegraphics[width=0.49\textwidth]{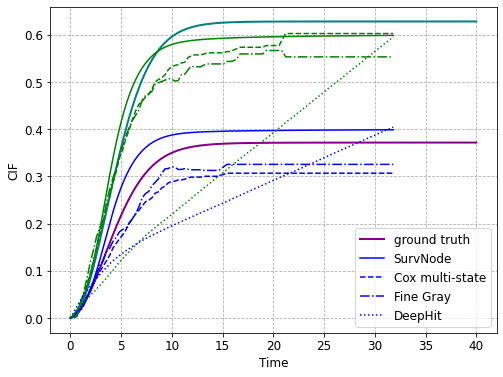}
    \includegraphics[width=0.49\textwidth]{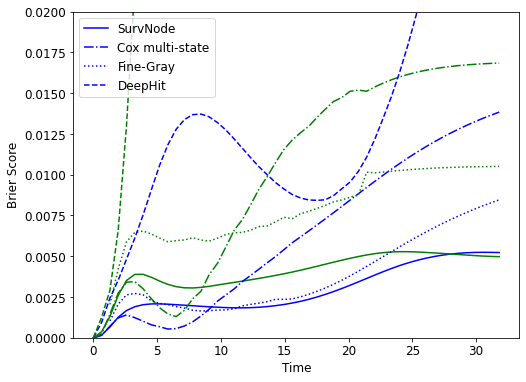}
    \caption{Cumulative incidence functions for the two competing outcomes for all benchmarked models on the top, Brier score for both outcomes and for all benchmarked models on the bottom. Risk $1$ is shown in blue for the estimates and purple for the ground truth, whereas risk $2$ is color coded in green for estimates and teal for ground truth. Lower Brier scores at each time show smaller deviation from the ground truth probability distributions and are therefore better. We can clearly see \snode{} outperforming all other models in terms of Brier score for almost all times.}
    \label{fig:comprisk}
\end{figure}

\subsection{Simulation of data}
The simulated data in the publication is generated in two ways. First, we simulate data with the \textsf{R} package \textbf{\texttt{coxed}}. 

In the survival cases, we choose three covariates, where one of the covariates has time varying coefficients to model a proportional hazards violation. We choose all coefficients to be of $\mathcal{O}(1)$, with a saw-tooth time dependence for the time dependent covariate. We sample $2048$ patients for the training set and $1024$ patients for the validation and test set respectively with event times between $0$ and $100$. In the case of the illness death model, we sample using the \textbf{\texttt{coxed}} package for every transition, assuming independence of each transition. We extract the covariates from the first sampled model and use them for the other two survival realizations, however choosing different coefficients. Due to a limitation of the \textbf{\texttt{coxed}} package, only the first sampled model can have time varying coefficients, with the other transitions then effectively being sampled from a Cox-model. In the competing case between "Illness" and "Death" from the "Health" state, we choose the first occurring time of the two sampled survival data realizations, no matter if there is censoring or not. The maximum time for the generated data in the competing case is $T=100$, whereas we choose $T=50$ for the transition from "Illness" to "Death". 

The second way is to directly sample from a Markov-Jump process. For this we implement a Gillespie sampling algorithm in \textbf{\texttt{Julia}} \cite{bezanson2017julia}, using the \textbf{\texttt{DifferentialEquations.jl}} \cite{rackauckas2017differentialequations} package. We sample parameters for a Weibull distribution for each transition in the multi-state case and multiplicatively add covariate dependence in a proportional hazards way. To break proportional hazards, we use time dependent coefficients for two of the $12$ covariates, as in the above sampling algorithm. We choose all coefficients to be of $\mathcal{O}(1)$. We sample $5000$ patients, which we then split into 64\% training, 16\% validation and 20\% test set. As we specify the underlying hazard functions, ground truth for both hazard functions as well as probability distributions is directly accessible for any multi-state model. The simulation code is available on the \snode{} github page.

\subsection{Data sets and hyperparameters}
The \textsc{metabric} and \textsc{support} data sets are standard survival data sets for benchmarking. The characteristics are shown in \autoref{tab_sp_1}~\citep{kvamme2019time} and are obtained from the \href{https://github.com/havakv/pycox}{{\color{blue} pycox}} python package \citep{kvamme2019time}.
\begin{table*}[t]
  \caption{Characteristics of the \textsc{metabric} and \textsc{support} data sets.}
  \label{tab_sp_1}
  \centering
    \begin{tabular}{c|c|c|c|c}
    Data set &  Size & Covariates & Unique Durations & Prop. Censored \\
    \hline
    \textsc{support} & 8873 & 14 & 1714 & 0.32 \\
    \textsc{metabric} & 1904 & 9 & 1686 & 0.42
    \end{tabular}
\end{table*}
The \textsc{synthetic} data set in the competing hazards case is taken from \citet{lee2018deephit} and available on Github %\href{https://raw.githubusercontent.com/chl8856/DeepHit/master/sample\%20data/SYNTHETIC/synthetic_comprisk.csv}{{\color{blue}github}} 
with $30000$ patients and two outcomes, where $50\%$ of patients experience any event, whereas the other $50\%$ are censored.

For all benchmark experiments we do a five-fold cross validation where we split the data in an $80-20$ split into $20\%$ test-data and the remaining data again in an $80-20$ split into $64\%$ training data and $16\%$ validation data.

The hyperparameter space used in the benchmarks on \textsc{metabric} and \textsc{support} are
\begin{itemize}[noitemsep]
    \item $L_e = 2$ with $N_e=[400,1000]$ and $p_e=0.1$;
    \item $L_q=[2,4]$ with $N_q=[400,1000]$
    \item $M=[50,150]$;
    \item $\mu=10^{-4}$;
    \item $S=2.$;
    \item $l=[1e-4,1e-3]$;
    \item $w=[1e-7,1e-3]$.
\end{itemize}
We use random sampling from the hyperparameter space to get $16$ realizations of the hyperparameters. The batch size is taken to be either $512$ or the length of the data set, whichever is smaller.

For the competing hazards experiment we use the hyperparameters

\begin{itemize}[noitemsep,topsep=0pt,parsep=0pt,partopsep=0pt]
    \item $L_e = 3$ with $N_e=200$ and $p_e=0.1$;
    \item $L_Q=3$ with $N_Q=800$
    \item $M=20$;
    \item $\mu=10^{-3}$;
    \item $S=1.$;
    \item $l=5e-4$;
    \item $w=1e-3$.
\end{itemize}
For the comparison with the non-parametric Aale--Johansen estimator the hyperparameters used for the model minimizing the negative log likelihood were
\begin{itemize}[noitemsep,topsep=0pt,parsep=0pt,partopsep=0pt]
    \item $L_e = 2$ with $N_e=800$ and $p_e=0.$;
    \item $L_Q=3$ with $N_Q=1000$
    \item $M=20$;
    \item $\mu=10^{-4}$;
    \item $S=1.$;
    \item $l=1e-4$;
    \item $w=1e-7$.
\end{itemize}
In the case of the latent model minimizing the ELBO we used
\begin{itemize}[noitemsep]
    \item $L_p = 2$ with $N_p=400$ and $p_p=0.$;
    \item $L_q = 2$ with $N_p=1000$ and $p_p=0.$;
    \item $L_Q=3$ with $N_Q=1000$
    \item $M=70$;
    \item $\mu=10^{-4}$;
    \item $S=1.$;
    \item $l=1e-4$;
    \item $w=1e-7$;
    \item $\beta=1$,
\end{itemize}
and for clustering the latent space the hyperparameter setting we use is
\begin{itemize}[noitemsep]
    \item $L_p = 2$ with $N_p=400$ and $p_p=0.$;
    \item $L_q = 2$ with $N_p=400$ and $p_p=0.$;
    \item $L_Q=2$ with $N_Q=1000$
    \item $M=50$;
    \item $\mu=10^{-4}$;
    \item $S=1.$;
    \item $l=5e-5$;
    \item $w=1e-7$;
    \item $\beta=1$.
\end{itemize}
Finally, for the Brier score in the simulated Illness-Death model, we use
\begin{itemize}[noitemsep,topsep=0pt,parsep=0pt,partopsep=0pt]
    \item $L_e = 3$ with $N_e=50$ and $p_e=0.1$;
    \item $L_Q=3$ with $N_Q=200$
    \item $M=50$;
    \item $\mu=10^{-5}$;
    \item $S=1.$;
    \item $l=1e-4$;
    \item $w=1e-4$,
\end{itemize}
and in the real world example we use
\begin{itemize}[noitemsep,topsep=0pt,parsep=0pt,partopsep=0pt]
    \item $L_e = 3$ with $N_e=100$ and $p_e=0.1$;
    \item $L_Q=2$ with $N_Q=2000$
    \item $M=50$;
    \item $\mu=10^{-7}$;
    \item $S=1.$;
    \item $l=1e-3$;
    \item $w=1e-8$.
\end{itemize}
all of which were only manually hyperparameter tuned on train and validation set.
\section{Visualisation of calibration in the survival setting}
We can examine the calibration of the model in the simple case of one binary covariate. In this case we can use the population level non-parametric Kaplan--Meier estimator \citep{kaplan1958nonparametric} to obtain the survival function $S(t)$. We use the \textsf{R} package \textbf{\texttt{coxed}} \citep{harden_kropko_2019} to simulate survival data with proportional hazards violation and one binary variable \texttt{var}.
We split the data set into training, validation and test set and obtain the Kaplan--Meier estimator for both variable $\texttt{var}=0$ and $\texttt{var}=1$ on the test data. The survival model is trained on the training data with early stopping using the validation data and predicted for $\texttt{var}=0$ and $\texttt{var}=1$. This prediction is compared to the Kaplan--Meier estimator on the test data. We compare our model (\snode) with a Cox proportional hazards model, a fully parametric accelerated failure time model based on the Weibull distribution \citep{collett2003modelling}, as well as DeepHit \citep{lee2018deephit} and Cox-Time \citep{kvamme2019time}, a discrete and continuous time machine learning model, respectively. The visual comparison can be seen in \autoref{fig:survival_compare_cox}.
\begin{figure}
    \centering
    \includegraphics[width=0.49\textwidth]{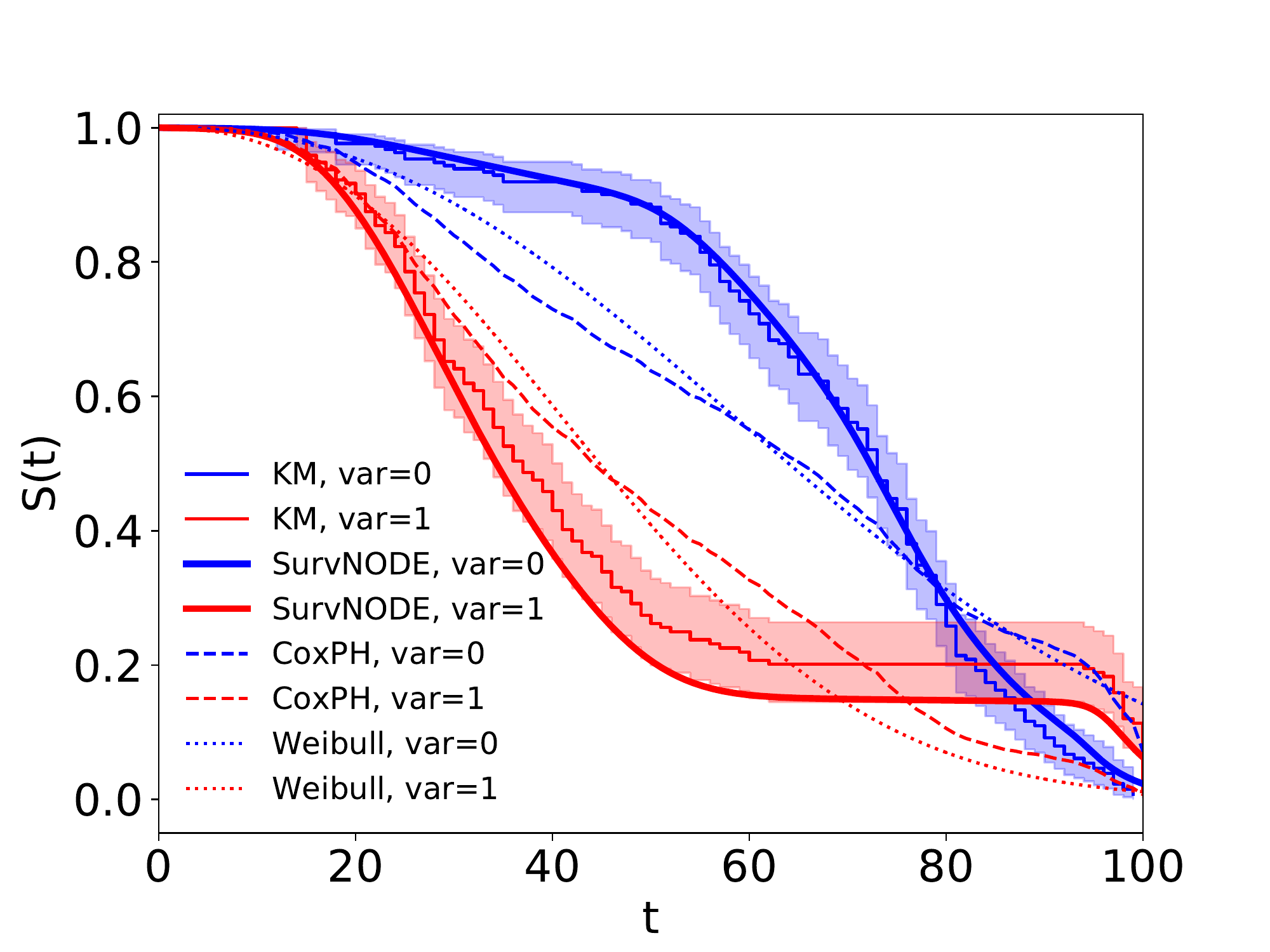}
    \includegraphics[width=0.49\textwidth]{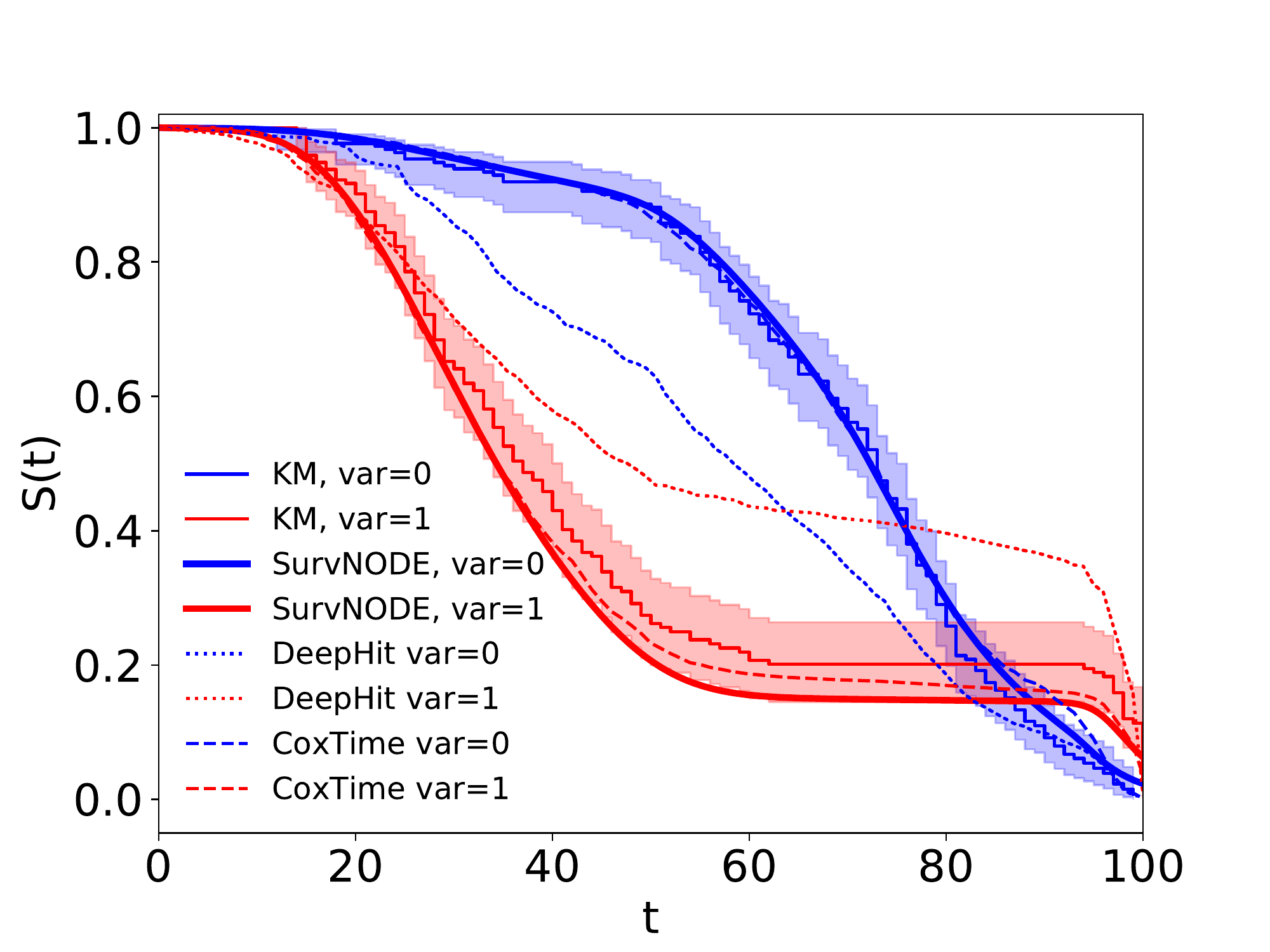}
    \caption{Comparison of different tools to fit survival distributions in the population level case of one binary covariate. On the top our model is compared with the standard tools of survival analysis and the non-parametric Kaplan--Meier estimator of the test set. On the bottom we compare our model on the same data with two state of the art machine learning approaches, again with the Kaplan--Meier as a non-parametric estimator.}
    \label{fig:survival_compare_cox}
\end{figure}
We see that due to the proportional hazard violation, the Cox model as well as the model based on the parametric Weibull distribution do not capture the survival function well, whereas the SurvNODE model does. Comparing to the other machine learning based frameworks, we see that DeepHit does not reproduce the survival function well.
\section{Clustering: Covariates and survival strata}
To further examine the clustering of the latent space, we can superimpose the nine binary covariates in the model on the UMAP projection. This can be seen in \autoref{fig:cluster_covar}. We see that some of the clusters clearly reflect the covariates, for example in the case of covariate one, which is the lowest third of the covariate with the largest effect size for one of the transitions in the simulation, we see that almost all the values are in one of the clusters. By characterizing the effect of the covariates on these clusters with specific survival properties, we can obtain the influence of the covariate on survival.
\begin{figure}
    \centering
    \includegraphics[width=\linewidth]{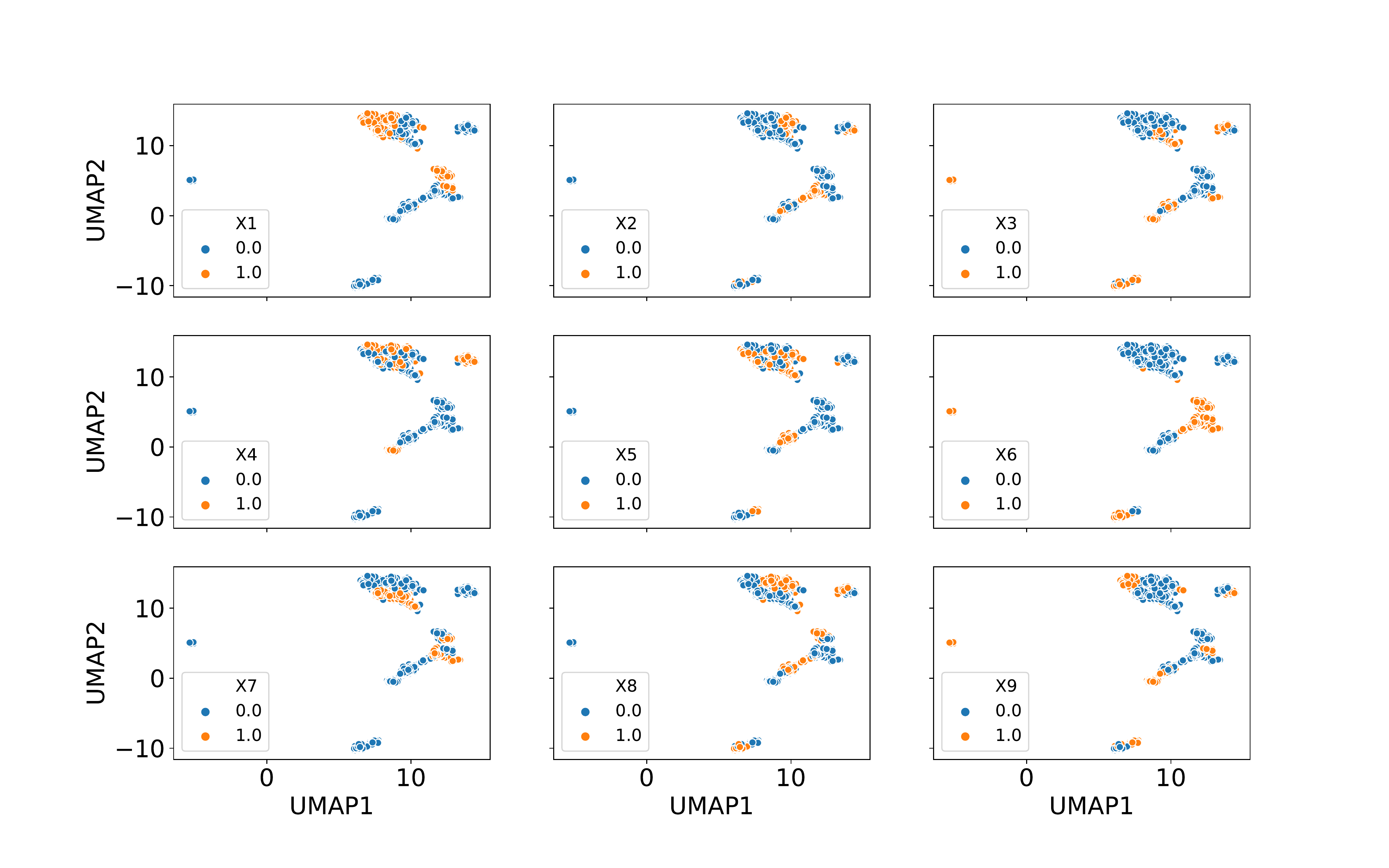}
    \caption{Possible values of the nine binary covariates in the model. We see that some of the clusters clearly reflect the covariates.}
    \label{fig:cluster_covar}
\end{figure}

\begin{figure}
    \centering
    \includegraphics[width=.5\textwidth]{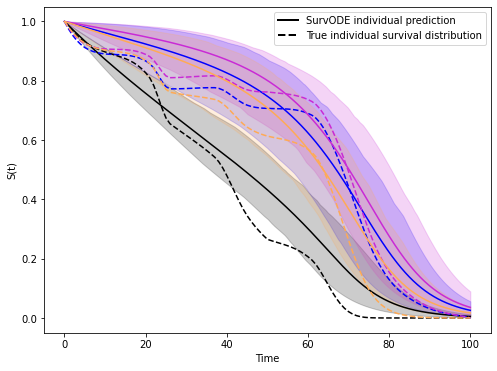}
    \caption{Predicted survival function vs real underlying survival function from the simulation. We see that the credible intervals cover the underlying survival function well.}
    \label{fig:cred_calib}
\end{figure}

\section{Calibration of the credible intervals}
A visual way to show the calibration of the credible intervals is to predict individual survival over time and plot together with the true underlying survival function obtained from the \textbf{\texttt{coxed}} \textsf{R} package. This can be seen in \autoref{fig:cred_calib}. We see that the credible intervals contain the survival function in most of the cases.
\end{document}